\begin{document}
%
\title{Understanding Global Loss Landscape of One-hidden-layer ReLU Networks\\ Part 1: Theory}
%
%
%

\author{Bo Liu
	\thanks{Bo Liu is with College of Computer Science, Faculty of Information Technology, Beijing University of Technology, Beijing, China. e-mail: liubo@bjut.edu.cn.}}

\maketitle

\begin{abstract}
For one-hidden-layer ReLU networks, we prove that all differentiable local minima are global inside differentiable regions. We give the locations and losses of differentiable local minima, and show that these local minima can be isolated points or continuous hyperplanes, depending on an interplay between data, activation pattern of hidden neurons and network size. Furthermore, we give necessary and sufficient conditions for the existence of saddle points as well as non-differentiable local minima, and their locations if they exist.
\end{abstract}

\begin{IEEEkeywords}
deep learning theory, ReLU, loss landscape, local minima, saddle points.
\end{IEEEkeywords}

%
\IEEEpeerreviewmaketitle

\section{Introduction}
\label{Introduction}

\IEEEPARstart{O}{ne} of the greatest mysteries in deep learning is the non-convex global loss landscape of deep neural networks. Understanding the global landscape of loss functions, especially whether bad local minima and saddle points exist, their count and locations if they do exist, will not only contribute to understanding the performance of popular local search based optimization methods \cite{RuoyuSunSurvey} such as gradient descent from a geometric point of view, but also can inspire new search algorithms that are guaranteed to escape all bad local minima and saddle points effectively and converge efficiently.

It has been shown that there are no bad local minima for some specific types of networks, including deep linear networks, one-hidden-layer networks with quadratic activations, ultra-wide networks, and networks with special type of extra neurons (see section \ref{section8} for related works). In other words, for these networks all local minima are global, hence there is no chance of getting stuck in bad local minima for local search based optimiztion methods. 

Unfortunately, for ReLU networks that are most widely used in practice, this no bad local minima property does not hold anymore, as evidenced in the studies of e.g., \cite{SafranShamir18,Swirszcz,ChulheeYun19,CRITICALPOINTS,BoundsDescentPathsICLR20}. However, these works either constructed concrete data examples and networks or performed experiments to demonstrate the existence of bad local minima for one-hidden-layer ReLU networks. So far a general theory of existence of bad local minima in ReLU networks was still missing. The weight space of ReLU networks is divided into differentiable regions and non-differentiable boundaries between them due to the non-smoothness introduced by ReLU activation. They were unclear in theory that for ReLU networks of any size and any input data, under what conditions there exist differentiable local minima, and under what conditions there exist non-differentiable local minima and saddle points, and their count and locations if they do exist. It was also unclear that beyond small regions surrounding global minima, how big the probability of existing local minima is at any location in the whole weight space.

In this work, we seek to understand the global loss landscape of one-hidden-layer ReLU networks and answer the above theoretical questions, in the hope of giving inspirations to the understanding of general deep ReLU networks. More specifically, for one-hidden-layer ReLU networks of any size (not just over-parameterized case where network size is bigger than the number of samples) and any input, we have made the following contributions in this paper.

\begin{itemize}
	\item We prove that in differentiable regions all local minima are global (i.e., there are no bad local minima in differentiable regions). We show that local minima can be isolated points or continuous hyperplanes, depending on an interplay between data, activation pattern of hidden-layer neurons and network size. The conditions for existing differentiable local minima and their locations are given.  
	\item We give necessary and sufficient conditions for the existence of saddle points and their locations.
	\item We give necessary and sufficient conditions for the existence of non-differentiable local minima that lie on the boundaries between differentiable regions, and give their locations if they do exist.		
	
\end{itemize}

This paper is organized as follows. Section \ref{section2} describes the one-hidden-layer ReLU network model and gives some preliminaries on Moore-Penrose inverse. In section \ref{section3}, we prove that all local minima are global in differentiable regions. Section \ref{section4} gives the locations of differentiable local minima and presents conditions for the existence of genuine differentiable local minima, and illustrates the single point and continuous cases of local minima with a simple example. We give the necessary and sufficient conditions for saddle points in section \ref{section5}, and for non-differentiable local minima in section \ref{section6}. Missing proofs are given in section \ref{section7}. Section \ref{section8} is related work.

\section{One-hidden-layer ReLU Neural Network Model and Preliminaries} \label{section2}
\subsection{One-hidden-layer ReLU Neural Networks}
\label{section2.1}
In the one-hidden-layer ReLU network model studied in this paper, suppose there are $ K $ hidden neurons with ReLU activations, \textit{d} input neurons and a single output neuron. We use $[N]$ to denote $\left \{ 1,2,\cdots,N \right \}$. The input samples are $ \left \{ (\mathbf{x}_i,\ y_i)\ ,i\in [N] \right \} $, where $ \mathbf{x}_i\in \mathbb{R}^d$ is the \textit{i}th homogeneous data vector (i.e., augmented with scalar 1) and $ y_i\in\pm1 $ is the label of  $ \mathbf{x}_i $. We make no assumptions on the network size and input data. Denoting the weight vectors connecting hidden neurons and input as $ \left \{ \mathbf{w}_i,\ i\in [K] \right \} $ (augmented with bias), and the weights between output neuron and hidden ones as $ \left \{ z_i,\ i\in [K] \right \} $, the loss of one-hidden-layer ReLU networks is
\begin{equation}\label{eq1}
L(z,\mathbf{{w}})=\frac{1}{N}\sum_{i=1}^{N}l(\sum_{j=1}^{K}z_j\cdot \begin{bmatrix}
\mathbf{w}_j\cdot \mathbf{x}_i
\end{bmatrix}_{+},y_i),
\end{equation}
where $z=\left\{z_k,\ k\in [K]\right\}$, $\mathbf{w}= \left\{\mathbf{w}_k,\ k\in [K]\right\}$, $ [y]_+=max(0,y) $ is the ReLU function and \textit{l} is the loss function. We assume \textit{l} is convex, which is true for the commonly used squared loss and cross-entropy loss.

\subsection{Moore-Penrose Inverse}
\label{section2.2}
Moore-Penrose inverse of matrices \cite{MatrixAnalysis} will be heavily used in this paper. $M^+$ denotes the Moore-Penrose inverse of a matrix $ M\in\ \mathbb{R}^{m\times n}$. It satisfies the following four equations: $MM^+M=M$, $M^+MM^+=M^+$, $({MM^+})^T=MM^+$, $({M^+M})^T=M^+M$. Therefore, $M^+=0$  if $M=0$. Moore-Penrose inverse has the following properties that will be useful in this paper: $ A^+=(A^TA)^+A^T $, $ (A^+)^T=(A^T)^+ $, $MM^+=I_m$ if and only if $rank(M)=m$, $M^+M=I_n$ if and only if $rank(M)=n$, where $I_m$ is the $m\times m$ identity matrix. If $M\in \mathbb{R}_{r}^{m\times n}$ ($r> 0$ is the rank of $ M $), and the full-rank decomposition of $ M $ is $M=FG \ (F\in \mathbb{R}_{r}^{m\times r}, G\in \mathbb{R}_{r}^{r\times n})$, then $M^+=G^T{(GG^T)}^{-1}{(F^TF)}^{-1}F^T$. For $\mathbf{b}\in \mathbb{R}^m$, the general solution to the least square problem $\min_{\mathbf{z}} \begin{Vmatrix}
\mathit{M}\mathbf{z}-\mathbf{b}
\end{Vmatrix}_{2}^{2}$ is $\mathbf{z}=M^+\mathbf{b} + \left(I-M^+M\right)\mathbf{c}$, ($\mathbf{c}\in \mathbb{R}^n$ is arbitrary). The necessary and sufficient condition for the linear system $M\mathbf{z}=\mathbf{b}$ to be solvable is $MM^+\mathbf{b}=\mathbf{b}$,  and the general solution is also $\mathbf{z}=M^+\mathbf{b} + \left(I-M^+M\right)\mathbf{c}$.

\section{All Differentiable Local Minima Are Global}
\label{section3}
Let us rewrite the loss into a form that will simplify our problems. Introducing variables $I_{ij}$ which equal 1 if $\mathbf{w}_j \cdot \mathbf{x}_i>0$ and 0 otherwise, the loss can be rewritten as
\[L(z,\mathbf{{w}})=\frac{1}{N}\sum_{i=1}^{N}l(\sum_{j=1}^{K}z_j\cdot I_{ij} \mathbf{w}_j\cdot \mathbf{x}_i,y_i). \]
Defining $\mathbf{R}_j=z_j \mathbf{w}_j$, the loss is converted into
\begin{equation}\label{eq2}L(\mathbf{R})=\frac{1}{N}\sum_{i=1}^{N}l(\sum_{j=1}^{K} I_{ij} \mathbf{R}_j\cdot\mathbf{x}_i,y_i), 
\end{equation}
where $\mathbf{R}= \left\{\mathbf{R}_k,\ k\in [K]\right\}$. This conversion integrates the weights of two layers and is key to our proofs later in this paper.

For one-hidden-layer ReLU network model, sample $\mathbf{x}_i$ is a hyperplane in the space of $\mathbf{w}$, and samples $ \left \{ \mathbf{x}_i,\ i\in [N] \right \} $ partition the $\mathbf{w}$ space into a number of convex cells, such as cell 1 and cell 2 shown in Fig.1. Each weight vector $\mathbf{w}_j$ is therefore located in a certain cell or on the boundary of cells. If all weights $ \left \{ \mathbf{w}_j,\ j\in [K] \right \} $ are located inside cells and move within them without crossing the boundaries, $\left \{ I_{ij},\ i\in [N], j\in [K] \right \} $ will have constant values, and thus loss $ L $ is a differentiable function of $ \left \{ \mathbf{R}_j,\ j\in [K] \right \} $ within these cells. We call the cells $\left \{ \mathbf{w}_j,\ j\in [K] \right \}$ reside in as their \textbf{defining cells}, which can be specified by $\left \{ I_{ij},\ i\in [N], j\in [K] \right \} $. When crossing the boundary of two cells, such as moving $\mathbf{w}_1$ from cell 2 and to cell 1 in Fig.1, $I_{21}$ will change from 1 to 0 at the boundary. Therefore, loss $\mathit{L}$ is non-differentiable on the boundaries. 

Local minima $ (z^\ast, \mathbf{w}^\ast) $ may exist inside cells (each $ \mathbf{w}_j^\ast $ of local minima is inside a certain cell) or on the boundaries (at least one $ \mathbf{w}_j^\ast $ of local minima is on the boundary), and we call them \textbf{differentiable and non-differentiable local minima} respectively. Global landscape of $ L(z, \mathbf{w}) $ consists of local landscapes inside cells (each $ \mathbf{w}_j $ is inside a certain cell) and boundaries between them (at least one $ \mathbf{w}_j $ is on the boundary). In this section, we deal with local landscapes and minima inside cells specified by any feasible $\left \{ I_{ij},\ i\in [N], j\in [K] \right \} $ (by feasible we mean for given samples the values of $\left \{ I_{ij},\ i\in [N], j\in [K] \right \} $ can be achieved by certain $\left \{ \mathbf{w}_j,\ j\in [K] \right \}$). Local minima existing on cell boundaries will be discussed in section \ref{section6}.

\begin{figure}[ht]
	\vskip 0.2in
	\begin{center}
		\centerline{\includegraphics[width=3.0cm]{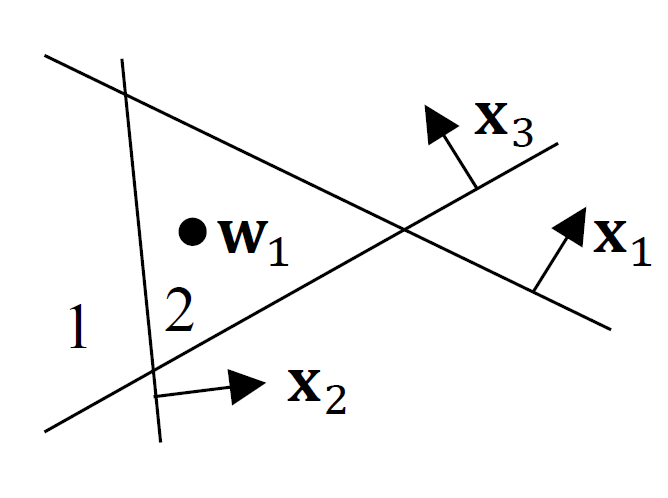}}
		\caption{Samples partition weight space into cells.}
		\label{celldef}
	\end{center}
	\vskip -0.2in
\end{figure}

In this section, we are going to prove that inside cells specified by any feasible constant $\left \{ I_{ij},\ i\in [N], j\in [K] \right \} $, all differentiable local minima are global, namely, there are no bad local minima for the local landscapes inside cells. Notice that despite differentiable minima are global in local landcapes, they might be bad local minima in the global landscape in the sense that their loss might be worse than that of differentiable minima in local landscapes of other cells.  

The core idea is to first prove that differentiable local minima of $ L(z, \mathbf{w}) $ inside cells will lead to $\frac{\partial L}{\partial \mathbf{R}_j}=0\ (j\in [K])$, then by convexity of $ L(\mathbf{R}) $ for constant $\left \{ I_{ij},\ i\in [N], j\in [K] \right \} $ and the fact that stationary point of a convex function corresponds to its unique global minimum, the desired conclusion is followed. 
\newtheorem{lemma}{Lemma}
\begin{lemma}
	Any differentiable local minimum of $L(z,\mathbf{{w}})$ in \eqref{eq1} corresponds to a stationary point of $L(\mathbf{{R}})$ in \eqref{eq2}, that is, $\frac{\partial L}{\partial \mathbf{R}_j}=0\ (j\in [K])$.
\end{lemma}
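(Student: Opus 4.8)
The plan is to relate the gradient of $L(z,\mathbf{w})$ with respect to the original parameters $(z_j,\mathbf{w}_j)$ to the gradient of $L(\mathbf{R})$ with respect to $\mathbf{R}_j = z_j\mathbf{w}_j$, and then argue that a differentiable local minimum of $L(z,\mathbf{w})$ forces $\partial L/\partial \mathbf{R}_j = 0$ for every $j\in[K]$. Since we are at a point where all weights lie strictly inside their defining cells, the indicator variables $I_{ij}$ are locally constant, so in a neighborhood both $L(z,\mathbf{w})$ and $L(\mathbf{R})$ are genuinely differentiable and the chain rule applies cleanly.

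First I would compute, via the chain rule and the relation $\mathbf{R}_j = z_j\mathbf{w}_j$, the two partial gradients of $L(z,\mathbf{w})$ in terms of $\tfrac{\partial L}{\partial \mathbf{R}_j}$: namely $\tfrac{\partial L}{\partial \mathbf{w}_j} = z_j\,\tfrac{\partial L}{\partial \mathbf{R}_j}$ and $\tfrac{\partial L}{\partial z_j} = \mathbf{w}_j \cdot \tfrac{\partial L}{\partial \mathbf{R}_j}$ (treating $\tfrac{\partial L}{\partial \mathbf{R}_j}$ as evaluated at $\mathbf{R}_j = z_j\mathbf{w}_j$). At a differentiable local minimum of $L(z,\mathbf{w})$ both of these must vanish for each $j$. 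Then I would split into cases according to whether $z_j = 0$ or $z_j \neq 0$. If $z_j \neq 0$, then $\tfrac{\partial L}{\partial \mathbf{w}_j} = z_j \tfrac{\partial L}{\partial \mathbf{R}_j} = 0$ immediately gives $\tfrac{\partial L}{\partial \mathbf{R}_j} = 0$, and we are done for that $j$.

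The delicate case, which I expect to be the main obstacle, is $z_j = 0$: then $\mathbf{R}_j = 0$ and the stationarity conditions above degenerate — $\tfrac{\partial L}{\partial \mathbf{w}_j} = 0$ is automatic and $\tfrac{\partial L}{\partial z_j} = \mathbf{w}_j\cdot\tfrac{\partial L}{\partial \mathbf{R}_j} = 0$ only constrains one component. To handle this I would use the local-minimum property more fully rather than just first-order stationarity in the given coordinates: consider the one-parameter family $z_j(t) = t$, $\mathbf{w}_j$ fixed (or, equivalently, perturb $\mathbf{R}_j$ along the ray $t\,\mathbf{w}_j$) and also perturbations that change the direction of $\mathbf{w}_j$ while keeping $z_j$ small. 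Since $L$ as a function of $\mathbf{R}_j$ near $0$ is differentiable with gradient $g := \tfrac{\partial L}{\partial \mathbf{R}_j}\big|_{\mathbf{R}_j=0}$, moving $\mathbf{R}_j$ in the direction $-g$ strictly decreases $L$ to first order; one must check this direction is reachable by an admissible perturbation of $(z_j,\mathbf{w}_j)$ staying inside the cell. Because the cell is open and $\mathbf{w}_j$ is interior, any direction for $\mathbf{w}_j$ is locally admissible, and taking $z_j$ to be a small nonzero scalar lets $\mathbf{R}_j = z_j \mathbf{w}_j$ point in an arbitrary direction of $\mathbb{R}^d$; hence if $g\neq 0$ we can decrease $L$, contradicting the local-minimum assumption. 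This forces $g = 0$, completing the argument. I would also remark that the convexity of $l$ is not needed for this lemma — only differentiability and the openness of the defining cells — though it will be used in the subsequent theorem.
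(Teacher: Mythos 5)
Your chain-rule computation and the easy case $z_j\neq 0$ match the paper exactly, and you correctly identify $z_j=0$ as the delicate case and correctly remark that convexity of $l$ is not needed for this lemma. The problem is in your reachability check for the $z_j=0$ case. You claim that, because the cell is open, ``taking $z_j$ to be a small nonzero scalar lets $\mathbf{R}_j=z_j\mathbf{w}_j$ point in an arbitrary direction of $\mathbb{R}^d$.'' That is false under the constraint that actually matters: local minimality of $(\hat z,\hat{\mathbf w})$ is only guaranteed in a small ball, so the perturbed $\mathbf{w}_j$ must stay within $\varepsilon$ of $\hat{\mathbf{w}}_j\neq 0$ (and $\hat{\mathbf{w}}_j\neq 0$ is forced, since the origin lies on every cell boundary). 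Hence the reachable $\mathbf{R}_j=z_j\mathbf{w}_j$ are confined to a thin double cone about $\pm\hat{\mathbf{w}}_j$, not to all of $\mathbb{R}^d$. Worse, the particular direction you propose to move in, $-g$ with $g=\frac{\partial L}{\partial\mathbf{R}_j}\big|_{\mathbf{R}_j=0}$, satisfies $g\cdot\hat{\mathbf{w}}_j=0$ by the very stationarity condition $\frac{\partial L}{\partial z_j}=0$ you derived, so $-g$ is orthogonal to the cone axis and is exactly the kind of direction that is \emph{not} reachable. As written, the step ``move $\mathbf{R}_j$ along $-g$'' fails.

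The gap is repairable with one extra observation: you do not need the exact direction $-g$, only a reachable direction with negative inner product against $g$. If $g\neq 0$, set $\mathbf{w}_j'=\hat{\mathbf{w}}_j-\tfrac{\varepsilon}{2}\,g/\lVert g\rVert$ (admissible, and still in the open cell for small $\varepsilon$), so that $g\cdot\mathbf{w}_j'=-\tfrac{\varepsilon}{2}\lVert g\rVert<0$; then with $z_j=t>0$ small, $L=L(\hat z,\hat{\mathbf w})+t\,g\cdot\mathbf{w}_j'+o(t)<L(\hat z,\hat{\mathbf w})$, contradicting local minimality. With this fix your contradiction argument goes through and is essentially a first-order-descent version of the paper's proof. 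The paper itself takes a slightly cleaner route that avoids the descent estimate altogether: since $\hat z_j=0$, perturbing $\hat{\mathbf{w}}_j$ to $\mathbf{w}_j'=\hat{\mathbf{w}}_j+\tfrac{\varepsilon}{2}\mathbf{u}$ while keeping $z_j=0$ leaves the loss unchanged, so the perturbed point is still a differentiable local minimum; applying $\frac{\partial L}{\partial z_j}=0$ there gives $g\cdot\mathbf{w}_j'=0$, and subtracting $g\cdot\hat{\mathbf{w}}_j=0$ yields $g\cdot\mathbf{u}=0$ for every unit vector $\mathbf{u}$, hence $g=0$ without any case analysis on signs or higher-order terms.
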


The following Theorem 1 establishes the globalness of differentiable local minima.

\newtheorem{thm}{Theorem}	
\begin{thm}	
	If loss function \textit{l} is convex, then inside cells specified by any feasible constant $\left \{ I_{ij},\ i\in [N], j\in [K] \right \} $, the differentiable local minimum of $L(z,\textbf{w})$ is global. Furthermore, $L(z,\textbf{w})$ has no differentiable local maxima.
\end{thm}

Despite inside cells $L(\mathbf{R})$ has a unique global minimum, $(z,\textbf{w})$ that achieves global minimal loss is not unique. Due to $\mathbf{R}_j=z_j \textbf{w}_j = cz_j\cdot \frac{1}{c} \textbf{w}_j$ if $c\neq 0$, as a result, if $\left \{ z_j,\textbf{w}_j \right \}$ achieves global minimal loss, so does $\left \{ cz_j,\frac{1}{c} \textbf{w}_j \right \}$. Moreover, although $L(z,\textbf{w})$ has no differentiable local maxima, it may have saddle points, which will be explored in detail in section \ref{section5}.

\subsection{Proofs of Lemma 1 and Theorem 1}\label{section3.1}
\begin{proof}[\indent \bfseries Proof of Lemma 1]
	After introducing variables $\left \{ I_{ij} \right \} $ and defining $\mathbf{R}_j=z_j\cdot\mathbf{w}_j$, the loss of one-hidden-layer ReLU networks has already been given in \eqref{eq2}. Notice that inside cells specified by any feasible constant $\left \{ I_{ij},\ i\in [N], j\in [K] \right \} $, $ L(\mathbf{R}) $ is a differentiable function of only $ \mathbf{R} $. 
	
	At any differentiable local minimum $\hat{z}=\left \{ \hat{z}_k,k\in [K] \right \}$ and $\hat{\mathbf{w}}=\left \{ \hat{\mathbf{w}}_k,k\in [K] \right \}$, the derivatives $\left \{ \frac{\partial L}{\partial z_j},\ \frac{\partial L}{\partial\mathbf{w}_j} \right \}  $ exist and are all equal to 0. By $\mathbf{R}_j=z_j\cdot\mathbf{w}_j$, we have for each $j\in[K]$,  
	\begin{equation}\label{A.3}
	\frac{\partial L}{\partial z_j}\left(\hat{z},\hat{\mathbf{w}}\right)=\frac{\partial L}{\partial\mathbf{R}_j}\left({\hat{\mathbf{R}}}_1,{\hat{\mathbf{R}}}_2,\cdots,{\hat{\mathbf{R}}}_\mathbf{j},\cdots{\hat{\mathbf{R}}}_K\right)\cdot{\hat{\mathbf{w}}}_j=0, 
	\end{equation}
	\begin{equation}\label{A.4}
	\frac{\partial L}{\partial\mathbf{w}_j}\left(\hat{z},\hat{\mathbf{w}}\right)=\frac{\partial L}{\partial\mathbf{R}_j}\left({\hat{\mathbf{R}}}_1,{\hat{\mathbf{R}}}_2,\cdots,{\hat{\mathbf{R}}}_\mathbf{j},\cdots{\hat{\mathbf{R}}}_K\right)\cdot{\hat{z}}_j=0,    
	\end{equation}
	where ${\hat{\mathbf{R}}}_\mathbf{j}={\hat{z}}_j\cdot{\hat{\mathbf{w}}}_j$. If ${\hat{z}}_j\neq0$, \eqref{A.4} implies $\frac{\partial L}{\partial\mathbf{R}_j}=0$, and \eqref{A.3} will be satisfied automatically. If ${\hat{z}}_j=0$, \eqref{A.4} is satisfied, we only need to prove $\frac{\partial L}{\partial\mathbf{R}_j}=0$ from \eqref{A.3} for the case of ${\hat{z}}_j=0$. \\
	
	Since $\left\{\hat{z},\hat{\mathbf{w}}\right\}$ is a local minima of $L$, by definition, there exists $\varepsilon>0$ such that for all $\mathbf{w}_j$ that satisfy $\left \| \mathbf{w}_j-\hat{\mathbf{w}}_j \right \|_2\leqslant \varepsilon$, the following holds
	\begin{equation}\label{A.5}
	\begin{split}
	L({\hat{z}}_1,{\hat{z}}_2,\cdots{\hat{z}}_K,{\hat{\mathbf{w}}}_1,{\hat{\mathbf{w}}}_2,\cdots\mathbf{w}_j,\cdots{\hat{\mathbf{w}}}_K)\geq \\  L({\hat{z}}_1,{\hat{z}}_2,\cdots{\hat{z}}_K,{\hat{\mathbf{w}}}_1,{\hat{\mathbf{w}}}_2,\cdots{\hat{\mathbf{w}}}_j,\cdots{\hat{\mathbf{w}}}_K). 
	\end{split}
	\end{equation}
	We now perturbate ${\hat{\mathbf{w}}}_j$ to $\mathbf{w}_j^\prime={\hat{\mathbf{w}}}_j+\frac{\varepsilon}{2}\mathbf{u}$, and keep $ \left\{ \hat{z}_1,\hat{z}_2,\cdots \hat{z}_K,{\hat{\mathbf{w}}}_1,{\hat{\mathbf{w}}}_2,\cdots{\hat{\mathbf{w}}}_{j-1},{\hat{\mathbf{w}}}_{j+1}\cdots{\hat{\mathbf{w}}}_K \right\} $ fixed, where $\mathbf{u}\in\mathbb{R}^d$ is a arbitrary unit vector. Notice that
	\begin{equation}\label{A.6}
	{\hat{\mathbf{R}}}_j={\hat{z}}_j{\hat{\mathbf{w}}}_j=0\ \ \ \textup{and}\ \ \ \mathbf{R}_j^\prime={\hat{z}}_j\mathbf{w}_j^\prime=0     
	\end{equation}
	due to ${\hat{z}}_j=0$. Therefore, loss $L$ remains constant under this perturbation, that is, 
	\begin{equation}\label{A.7}
	\begin{split}
	L\left({\hat{z}}_1,{\hat{z}}_2,\cdots{\hat{z}}_K,{\hat{\mathbf{w}}}_1,{\hat{\mathbf{w}}}_2,\cdots\mathbf{w}_j^\prime,\cdots{\hat{\mathbf{w}}}_K\right) \\ =L({\hat{z}}_1,{\hat{z}}_2,\cdots{\hat{z}}_K,{\hat{\mathbf{w}}}_1,{\hat{\mathbf{w}}}_2,\cdots{\hat{\mathbf{w}}}_j,\cdots{\hat{\mathbf{w}}}_K)
	\end{split}
	\end{equation}
	It can be shown that $\left\{{\hat{z}}_1,{\hat{z}}_2,\cdots{\hat{z}}_K,{\hat{\mathbf{w}}}_1,{\hat{\mathbf{w}}}_2,\cdots\mathbf{w}_j^\prime,\cdots{\hat{\mathbf{w}}}_K\right\}$ is also a local minimum of $L$. For any $\mathbf{w}_j$ satisfying $\left \| \mathbf{w}_j-\mathbf{w}_j^\prime \right \|_2\leqslant \frac{\varepsilon }{2}$, there is 
	\begin{center}
		$\left \| \mathbf{w}_j-\hat{\mathbf{w}}_j \right \|_2\leqslant \left \| \mathbf{w}_j-\mathbf{w}_j^\prime \right \|_2+\left \| \mathbf{w}_j^\prime-\hat{\mathbf{w}}_j \right \|_2\leqslant\frac{\varepsilon }{2}+\frac{\varepsilon }{2}= \varepsilon. $
	\end{center}	
	Then by \eqref{A.5} and \eqref{A.7}, we get
	\begin{equation}\label{A.8}
	\begin{split} L({\hat{z}}_1,{\hat{z}}_2,\cdots{\hat{z}}_K,{\hat{\mathbf{w}}}_1,{\hat{\mathbf{w}}}_2,\cdots\mathbf{w}_j,\cdots{\hat{\mathbf{w}}}_K) \\ 
	\geq L({\hat{z}}_1,{\hat{z}}_2,\cdots{\hat{z}}_K,{\hat{\mathbf{w}}}_1,{\hat{\mathbf{w}}}_2,\cdots\mathbf{w}_j^\prime,\cdots{\hat{\mathbf{w}}}_K), 
	\end{split}
	\end{equation}
	which implies that $(\hat{z}_1,\hat{z}_2,\cdots \hat{z}_K,{\hat{\mathbf{w}}}_1,{\hat{\mathbf{w}}}_2,\cdots,\mathbf{w}_j^\prime,\cdots{\hat{\mathbf{w}}}_K)$ is also a local minimum. As a result, similar to \eqref{A.3} we have
	\begin{equation}\label{A.9}
	\frac{\partial L}{\partial\mathbf{R}_j}\left({\hat{\mathbf{R}}}_1,{\hat{\mathbf{R}}}_2,\cdots,\mathbf{R}_j^\prime,\cdots{\hat{\mathbf{R}}}_K\right)\cdot\mathbf{w}_j^\prime=0
	\end{equation}
	Using the fact that ${\hat{\mathbf{R}}}_j=\mathbf{R}_j^\prime=0$ from \eqref{A.6} and consequently $\frac{\partial L}{\partial\mathbf{R}_j}\left({\hat{\mathbf{R}}}_1,{\hat{\mathbf{R}}}_2,\cdots,{\hat{\mathbf{R}}}_\mathbf{j},\cdots{\hat{\mathbf{R}}}_K\right)=\frac{\partial L}{\partial\mathbf{R}_j}\left({\hat{\mathbf{R}}}_1,{\hat{\mathbf{R}}}_2,\cdots,\mathbf{R}_j^\prime,\cdots{\hat{\mathbf{R}}}_K\right)$, subtracting \eqref{A.3} from \eqref{A.9} yields 
	\begin{equation*}
	\frac{\partial L}{\partial\mathbf{R}_j}\left({\hat{\mathbf{R}}}_1,{\hat{\mathbf{R}}}_2,\cdots,{\hat{\mathbf{R}}}_\mathbf{j},\cdots{\hat{\mathbf{R}}}_K\right)\cdot\mathbf{u}=0.
	\end{equation*}
	Since $\mathbf{u}$ is arbitrary, this leads to $\frac{\partial L}{\partial\mathbf{R}_j}\left({\hat{\mathbf{R}}}_1,{\hat{\mathbf{R}}}_2,\cdots,{\hat{\mathbf{R}}}_\mathbf{j},\cdots{\hat{\mathbf{R}}}_K\right)=0$. Therefore, no matter $z_j$ equals 0 or not, we always have $\frac{\partial L}{\partial\mathbf{R}_j}=0\ (j\in[K])$ at local minima. This proof is inspired by \cite{Laurent18}.  \\
\end{proof} 

\begin{lemma}
	$L\left(\mathbf{R}_1,\mathbf{R}_2,\cdots\mathbf{R}_K\right)$ is convex inside cells if $l$ is convex.
\end{lemma}
The convexity of $L\left(\mathbf{R}_1,\mathbf{R}_2,\cdots\mathbf{R}_K\right)$ is proved by showing the positive definiteness of its Hessian. The detailed proof is given in section \ref{section7}.  \\

Now, we are ready to prove Theorem 1.

\begin{proof}[\indent \bfseries Proof of Theorem 1]
	By Lemma 1, inside cells specified by any feasible constant $\left \{ I_{ij},\ i\in [N], j\in [K] \right \} $, the differentiable local minimum of $L(z,\mathbf{w})$ is a stationary point of $L(\mathbf{R})$, which is its unique global minimum due to its convexity. Therefore, differentiable local minima are also global minima for local landscapes inside cells. Furthermore, similar to Lemma 1, one can prove that inside cells local maximum of $L(z,\mathbf{w})$ corresponds to local maximum of $L(\mathbf{R})$. Howerver, the convexity of $L(\mathbf{R})$ means it has no differentiable local maximum. As a result, $L(z,\mathbf{w})$ has no differentiable local maxima.   
\end{proof}

\section{The Locations of Differentiable Local Minima}
\label{section4}
Theorem 1 states that inside cells, all local minima of loss $L(z,\textbf{w})$ are global. In this section, we first find out the locations of $\left\{z_j^\ast,\mathbf{w}_j^\ast,\ j\in [K]\right\}$ that achieve global minima, then give the criteria to judge whether $\left\{\mathbf{w}_j^\ast,\ j\in [K]\right\}$ are inside the defining cells of $\left\{\mathbf{w}_j,\ j\in [K]\right\}$ (we will use the defining cells of $\left\{\mathbf{w}_j,\ j\in [K]\right\}$ and $\left\{\mathbf{w}_j^\ast,\ j\in [K]\right\}$ interchangeably from now on) and consequently truely exist.

\subsection{The Locations and Forms of Differentiable Local Minima}\label{section4.1}
From now on, in order to get analytical solutions we assume that loss function \textit{l} is the squared loss. Lemma 1 implies that for differentiable local minima, there are $\frac{\partial L}{\partial \mathbf{R}_j}=0\ (j\in [K])$, which actually amounts to solving the following least-square problem for constant $\left \{ I_{ij},\ i\in [N], j\in [K] \right \} $,
\begin{equation}\label{eq5}
\mathbf{R}^\ast  = \arg\min_{\mathbf{R}} \frac{1}{N}\sum_{i=1}^{N}(\sum_{j=1}^{K} I_{ij} \mathbf{R}_j\cdot\mathbf{x}_i - y_i)^2.
\end{equation}
The associated linear system $\sum_{j=1}^{K} I_{ij} \mathbf{R}_j\cdot\mathbf{x}_i = y_i \ (i\in [N])$ can be rewritten in the following form 
\begin{equation}\label{eq6}
A\mathbf{R}=\mathbf{y},A=\left(\begin{matrix}I_{11}\mathbf{x}_1^T&\cdots& I_{1K}\mathbf{x}_1^T\\\vdots&\ddots&\vdots\\I_{N1}\mathbf{x}_N^T&\cdots& I_{NK}\mathbf{x}_N^T\\\end{matrix}\right),\mathbf{y}=\left(\begin{matrix}\begin{matrix}y_1\\y_2\\\end{matrix}\\\begin{matrix}\vdots\\y_N\\\end{matrix}\\\end{matrix}\right),
\end{equation}
where $ \mathbf{R}=\begin{pmatrix}
{\mathbf{R}_1}^T& \hdots &{\mathbf{R}_K}^T 
\end{pmatrix}^T$. Here we have changed the meaning of $\mathbf{R}$ from a set in \eqref{eq2} to a vector without hampering the understanding. According to matrix theory (see subsection \ref{section2.2}), the general solution $\mathbf{R^{\ast }}$ to the least square problem \eqref{eq5} can be expressed as follows using the Moore-Penrose inverse of $ A\in\ \mathbb{R}^{N\times Kd}$, 
\begin{equation}\label{eq7}
\mathbf{R^{\ast }}=A^+\mathbf{y} + \left(I-A^+A\right)\mathbf{c},
\end{equation}
where $ \mathbf{c}\in \mathbb{R}^{Kd}$ is a arbitrary vector, $I$ is identity matrix.

The optimal solution $\mathbf{R}^\ast$ can be characterized by the following cases:

1). $\mathbf{R}^\ast$ is unique: $\mathbf{R}^\ast = A^+\mathbf{y}$, corresponding to $A^+A=I$ and thus $(I-A^+A)\mathbf{c}$ vanishes. This happens if and only if $rank (A)=Kd$. Therefore, $N\geq Kd$ is necessary in order to have a unique solution. Using the full-rank decomposition of $ A $ when it has full rank, we have $A^+={(A^TA)}^{-1}A^T$, the solution can then be written as 
\begin{equation}\label{eq8}
\mathbf{R}^\ast={(A^TA)}^{-1}A^T\mathbf{y}. 
\end{equation}
\eqref{eq8} can also be obtained by solving the linear system resulted from $\frac{\partial L}{\partial\mathbf{R}_j}=0\ (j\in [K])$, an approach we will take to deal with saddle points in section \ref{section5}. 	 

2). $\mathbf{R}^\ast$ has infinite number of continuous solutions. In this case, $I-A^+A\neq0$, hence the arbitrary vector $\mathbf{c}$ plays a role. This happens only if $rank(A)\neq Kd$. As a result, there are two possible situations in which infinite number of optimal solutions exist. a). $N<Kd$. This is usually refered to as over-parameterization and $A\mathbf{R}=\mathbf{y}$ has infinite number of solutions if $rank(A)=rank(A, \ \mathbf{y})$, with some components of $\mathbf{R}$ being free variables. b). $N\geq Kd$ but $rank(A)< Kd$. One example is that some hidden neurons are not activated by all samples ( i.e., $\forall i\in [N], I_{ij}=0$. The corresponding columns in \textit{A} are zeros). $\mathbf{R}_j$ associated with such hidden neuron does not affect loss \textit{L}, hence can be changed freely. An extreme is that all hidden units are not activated by any sample, leading to $A=0$ and consequently $A^+=0$ and $\mathbf{R}^\ast=\mathbf{c}$. In this case, $\mathbf{R}^\ast$ can be any point in the whole weight space, and thus the local landscape is a flat plateau.

In general, \eqref{eq7} shows $\mathbf{R}^\ast$ is a affine transformation of $ \mathbf{c}\in \mathbb{R}^{Kd}$. Therefore, $\mathbf{R}_j^\ast$ can be a isolated point, the whole $\mathbb{R}^{d}$ space or a linear subspace (a hyperplane) of it, depending on whether $rank (A)=Kd$ and the rows in $\left(I-A^+A\right)$ corresponding to $\mathbf{R}_j^\ast$ is of full rank or not. Since $ A $ is specified by data and activation pattern $I_{ij}$, the form of differentiable local minima is jointly determined by data, activation pattern of hidden neurons and network size ($K$ and $d$).

To get the loss at these minima, we substitute \eqref{eq7} into $L(\mathbf{R}^\ast)=\frac{1}{N}\left \| A\mathbf{R}^\ast-\mathbf{y} \right \|_{2}^{2}$ and get
\begin{equation}\label{eq9}
L(\mathbf{R}^\ast)=\frac{1}{N}\left \| AA^+\mathbf{y}-\mathbf{y} \right \|_{2}^{2}
\end{equation}
The loss $ L $ will be zero only if $ AA^+\mathbf{y}=\mathbf{y}$, corresponding to that the original linear system $A\mathbf{R}=\mathbf{y}$ is solvable (see subsection \ref{section2.2}).

\subsection{An Illustrative Example}\label{section4.2}

We give a simple example to illustrate different cases of differential local minima. Suppose there is only one hidden neuron, and there are two samples in two-dimensional input space: $\mathbf{x}_1=\begin{pmatrix}
1 & 0
\end{pmatrix}^T,\ \mathbf{x}_2=\begin{pmatrix}
0 & 1
\end{pmatrix}^T$ with labels $y_1=1,\ y_2=1$. We set $z=1$ and bias $b=0$. Denoting the only weight vector as $\mathbf{w}$, the two samples then become two lines in the space of $\mathbf{w}$, and their normal vectors are shown in Fig.2(a). There are in total four cells in the $\mathbf{w}$ space. Fig.2(b) shows the global landscape, from which one can see that there are no spurious differential local minima in each cell, and the differential local minima are either a single point, a line or a flat plateau. Fig.2(b) also exhibits that although the continuous local minima in cells $r_1$, $r_2$ and $r_3$ are global minima in corresponding cells, they are still bad minima with respect to the global landscape. 

In cell $r_1, I_{11}=I_{21}=0$, thus $A=0$ and $\mathbf{R}^\ast$ is arbitrary. According to \eqref{eq9}, the loss $L=\frac{1}{2}\left(y_1^2+y_2^2\right)=1$. Actually, in cell $r_1$, both samples are not activated and the loss does not change with $\mathbf{w}$, thus the local landscape is a flat plateau. Cell $r_2$ and $r_3$ are similar, and we will take $r_3$ as an example. In $r_3$, $I_{11}=1,{\ I}_{21}=0$, hence
$A=\left(\begin{matrix}\mathbf{x}_1^T\\0\\\end{matrix}\right)=\left(\begin{matrix}1&0\\0&0\\\end{matrix}\right)$, $A^+=\left(\begin{matrix}1&0\\0&0\\\end{matrix}\right)$, $\mathbf{R}^\ast=A^+\mathbf{y}+\left(I-A^+A\right)\mathbf{c}=\left(\begin{matrix}1&0\\0&0\\\end{matrix}\right)\left(\begin{matrix}1\\1\\\end{matrix}\right)+\left(\begin{matrix}0&0\\0&1\\\end{matrix}\right)\left(\begin{matrix}c_1\\c_2\\\end{matrix}\right)=\left(\begin{matrix}1\\c_2\\\end{matrix}\right)$, which is a line with distance 1 to $\mathbf{x}_1$. The minimal loss in $r_3$ is $L=\frac{1}{2}$. In region $r_4$, $I_{11}={\ I}_{21}=1,\ A=\left(\begin{matrix}\mathbf{x}_1^T\\\mathbf{x}_2^T\\\end{matrix}\right)=\left(\begin{matrix}1&0\\0&1\\\end{matrix}\right)$, thus $A^+=\left(\begin{matrix}1&0\\0&1\\\end{matrix}\right), \mathbf{R}^\ast=\mathbf{y}=\left(\begin{matrix}1\\1\\\end{matrix}\right)$, indicating the landscape in $r_4$ has a unique minimum. The minimal loss in $r_4$ is $L=0$ by \eqref{eq9}, hence the local minimum in $r_4$ is the global minimum of whole landscape.


\begin{figure*}
	\centering
	\subfigure[four cells in weight space.]{
		\label{fig:subfig:onefunction} 
		\includegraphics[width=4.5cm]{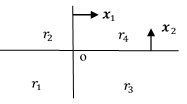}}
	\hspace{0.3cm}
	\subfigure[loss landscape for $y_1=1, y_2=1$.]{
		\label{fig:subfig:twofunction} 
		\includegraphics[width=4.0cm]{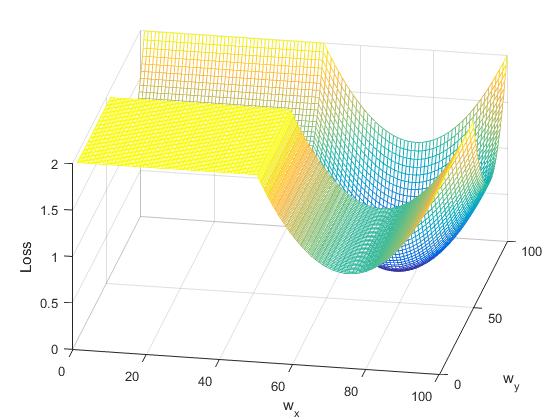}}
	\hspace{0.3cm}
	\subfigure[loss landscape for $y_1=1, y_2=-1$.]{
		\label{fig:subfig:twofunction} 
		\includegraphics[width=4.0cm]{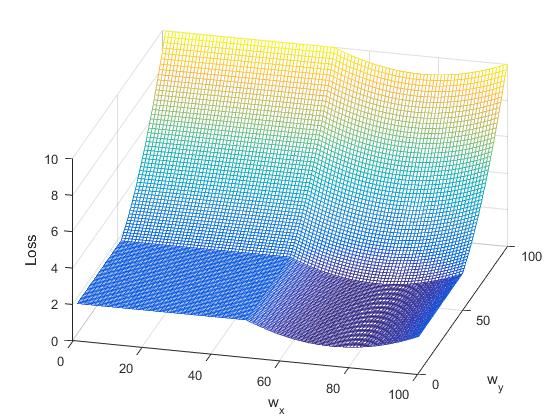}}
	\hspace{0.3cm}
	\caption{An illustrative example of loss landscape with two samples.}
	\label{fig:twopicture} 
\end{figure*}

\subsection{Criteria for Existence of Genuine Differentiable Local Minima}\label{section4.3}

In the above example, if $y_2=-1$, $\mathbf{R}^\ast$ for cell $r_4$ will be $\begin{pmatrix} 1 &-1 \end{pmatrix}^T$, which is actually outside $r_4$. In this situation, the local landscape of $r_4$ has no differential local minima at all, as shown in Fig.2(c). In this subsection, we are going to present conditions under which $\mathbf{R}^\ast$ will be inside their defining cells and we call such local minima as \textbf{genuine differentiable local minima}. In line with different cases of $\mathbf{R}^\ast$, the criteria for each case are discussed as follows.

1). For the case $\mathbf{R}^\ast$ is unique, in order for $\mathbf{w}^\ast$ to be inside the defining cells, $\mathbf{w}^\ast$ and $\mathbf{w}$ should be on the same side of each sample. Giving $\left \{ I_{ij}, i\in [N], j\in [K] \right \} $ that specify the defining cells, this can be expressed as
\begin{equation}\label{eq10}
\mathbf{w}_j^\ast\cdot\mathbf{x}_i \begin{cases} >0 \ \ \text{if } I_{ij}=1; \\ \le0 \ \ \text{if } I_{ij}=0; \end{cases} \ (i\in [N]; j\in [K]).
\end{equation}
Since $\mathbf{R}_j^\ast=z_j^\ast\mathbf{w}_j^\ast$, the conditions are transformed into $\frac{1}{z_j^\ast} \mathbf{R}_j^\ast \cdot \mathbf{x}_i{_<^>}\ 0$. Except for its sign, the magnitude of $z_j^\ast$ does not affect the conditions, and consequently for given $\mathbf{R}_j^\ast$ the differentiable local minima $(z_j^\ast, \mathbf{w}_j^\ast)$ have two branches, corresponding to different signs of $z_j^\ast$. As a result, the criteria for existence of unique differentiable local minima can be expressed as: for each $\mathbf{R}_j^\ast\, (j\in [K])$,
\begin{equation}\label{eq11}
\mathbf{R}_j^\ast\cdot\mathbf{x}_i \begin{cases} >0 \ \ \text{if } I_{ij}=1; \\ \le0 \ \ \text{if } I_{ij}=0; \end{cases} \ (i\in [N])  
\end{equation}
\begin{equation}\label{eq12}
\ \ or \ \  \mathbf{R}_j^\ast\cdot\mathbf{x}_i \begin{cases} <0 \ \ \text{if } I_{ij}=1; \\ \ge0 \ \ \text{if } I_{ij}=0; \end{cases} \ (i\in [N])
\end{equation}
2). For the case $\mathbf{R}^\ast$ is continuous, we need to test whether the continuous differentiable local minima in \eqref{eq7} are in their defining cells. For example, substituting \eqref{eq7} into \eqref{eq11}, then for each $\mathbf{R}_j^\ast\ (j\in [K])$ the criteria become
\begin{gather}\label{eq13}
\mathbf{x}_i^T((A^+\mathbf{y})_j+(I-A^+A)_j\mathbf{c}) \begin{cases} >0 \ \ \text{if } I_{ij}=1; \\ \le0 \ \ \text{if } I_{ij}=0; \end{cases}  (i\in [N])
\end{gather}
where ${(A^+\mathbf{y})}_j$ is the rows of $A^+\mathbf{y}$ corresponding to $\mathbf{R}_j^\ast$, and so on. Each inequality of $\mathbf{c}$ in \eqref{eq13} defines a half-space in $\mathbb{R}^{Kd}$. Therefore, the criteria for existing genuine continuous differentiable local minima are reduced to identifying whether the intersection of all these half-spaces is null. The intersection, if not null, will be a convex high-dimensional polyhedron. Efficient implementation of half-spaces intersection to judge the existence of differentiable local minima will be discussed in part 2 of this work \cite{globallosslandscape_part2}.

\section{Saddle Points}\label{section5}
In this section, we will study the existence of differentiable saddle points that are located inside cells.
\subsection{Necessary and Sufficient Conditions for Existence of Differentiable Saddle Points}\label{section5.1}
Unlike local minima, saddle points are stationary points that have both ascent and descent directions in their neighborhood, thus their Hessians are indefinite. 

The necessary and sufficient conditions for existence of differentiable saddle points are given in the following theorem.

\begin{thm}	
	For the loss in \eqref{eq1} with \textit{l} being the squared loss, there exist differentiable saddle points for all combinations of the form $ (j_1,j_2,\cdots,j_{K^\prime};\ K^\prime<K) $, where $(j_1,j_2,\cdots,j_K)$ is any permutation of $(1,2,\cdots,K)$. Optimal $ \left \{  \mathbf{R}_j^\ast,\ (j=j_1,j_2,\cdots,j_{K^\prime}) \right \} $ of saddle points are the solutions to the linear system $B\widetilde{\mathbf{R}}=\mathbf{b}$, i.e., 
	\begin{equation}\label{eq51}
	\widetilde{\mathbf{R}}^\ast=B^+\mathbf{b} + \left(I-B^+B\right)\mathbf{c}, \ \  \mathbf{c} \ \textup{is arbitrary}. 
	\end{equation}
	where
	$\widetilde{\mathbf{R}}={(\mathbf{R}_{j_1}^T,\mathbf{R}_{j_2}^T,\cdots,\mathbf{R}_{j_{K^\prime}}^T)}^T$, $B\in \mathbb{R}^{{K^\prime}d\times {K^\prime}d}$ is a block matrix and $\mathbf{b}\in \mathbb{R}^{{K^\prime}d}$ is a block vector with the following components, 
	\begin{gather}\label{eq14}
	B(j,k)=\sum_{i=1}^{N}{I_{ij}\mathbf{x}_i\cdot} I_{ik}\mathbf{x}_i^T \ , \notag  \\ \mathbf{b}(j)=\sum_{i=1}^{N}{I_{ij}\cdot} y_i\mathbf{x}_i,\  (j,k=j_1,j_2,\cdots,j_{K^\prime}).
	\end{gather}
	Optimal $ \left \{  \mathbf{R}_j^\ast,\ (j=j_{K^\prime+1},\cdots,j_K) \right \} $ of saddle points satisfy
	\begin{equation}\label{eq15}
	\sum_{i=1}^{N}e_i I_{ij}\mathbf{x}_i \cdot \mathbf{w}_j^\ast =0, 
	\end{equation}
	where the error $e_i=\sum_{k=j_1}^{j_{K^\prime}}{I_{ik}\mathbf{R}_k^\ast}\cdot\mathbf{x}_i-y_i$. \eqref{eq51} and \eqref{eq15} are both necessary and sufficient for $\left \{z_j^\ast,\mathbf{w}_j^\ast \right \} $ to be saddle points.
	
\end{thm}

\begin{proof}[\indent \bfseries Proof]
	Since saddle points are stationary points, \eqref{A.3} and \eqref{A.4} still hold. If $\frac{\partial L}{\partial\mathbf{R}_j}=0$, \eqref{A.3} and \eqref{A.4} are both satisfied. On the other hand, $z_j^\ast=0$ if $\frac{\partial L}{\partial\mathbf{R}_j}\neq0$ by \eqref{A.4}. However, $\frac{\partial L}{\partial\mathbf{R}_j}(j=1,2,\cdots,K)$ can not all equal zero at the same time, otherwise the solutions would be differentiable local minima rather than saddle points. Without loss of generality, suppose $\frac{\partial L}{\partial\mathbf{R}_j}=0\ (j=j_1,j_2,\cdots,j_{K^\prime};K^\prime<K)$, and the remaining $\frac{\partial L}{\partial\mathbf{R}_j}\ (j=j_{K^\prime+1},\cdots,j_K)$ are non-zeros. We need to test all possible combinations of the form $\left(j_1,j_2,\cdots,j_{K^\prime}\right)$ such that $ \frac{\partial L}{\partial\mathbf{R}_j}=0\ (j=j_1,j_2,\cdots,j_{K^\prime};K^\prime<K)$, and see whether there exist saddle points.  \\
	
	Ignoring the factor $\frac{1}{N} $ in $L$ and $\frac{2}{N} $ in $ \frac{\partial L}{\partial\mathbf{R}_j} $ from now on, we have	
	\begin{equation}\label{C.5}
	\begin{split}
	\frac{\partial L}{\partial\mathbf{R}_j}=0=\sum_{i=1}^{N}{\left(\sum_{k=1}^{K}{I_{ik}\mathbf{R}_k^\ast\cdot\mathbf{x}_i}-y_i\right)\cdot I_{ij}\mathbf{x}_i}\ , \\ j=j_1,j_2,\cdots,j_{K^\prime} .     
	\end{split}	
	\end{equation}	
	Since $\frac{\partial L}{\partial\mathbf{R}_k}\neq0\ (k=j_{K^\prime+1},\cdots,j_K),$ by $\mathbf{R}_k^\ast=0$ due to associated $z_k^\ast=0$, we get	
	\begin{equation}\label{C.6}
	\begin{split}
	\sum_{i=1}^{N}{\left(\sum_{k=j_1}^{j_{K^\prime}}{I_{ik}{\mathbf{x}_i^T} \mathbf{R}_k^\ast-y_i}\right)\cdot I_{ij}\mathbf{x}_i}\ =0, \\  j=j_1,j_2, \cdots, j_{K^\prime}.
	\end{split}	
	\end{equation}	
	Let $\widetilde{\mathbf{R}}={(\mathbf{R}_{j_1}^T,\mathbf{R}_{j_2}^T,\cdots,\mathbf{R}_{j_{K^\prime}}^T)}^T$, \eqref{C.6} leads to the following linear system 	
	\begin{equation*}
	\mathit{B}\widetilde{\mathbf{R}}=\mathbf{b}
	\end{equation*}                             	
	where $\mathit{B}\in\mathbb{R}^{K^\prime d\times K^\prime d}$ is a block matrix and $\mathbf{b}\in\mathbb{R}^{K^\prime d}$ is a block vector with components as shown in the theorem. The linear system is solvable if and only if $BB^+\mathbf{b}=\mathbf{b}$. Using the facts that $ B=A^TA $ and $ \mathbf{b}= A^T\mathbf{y}$, where $ A $ has the same form as that in \eqref{eq6} but with only $ K^\prime d $ columns, , and the properties of Moore-Penrose inverse described in subsection \ref{section2.2}, we can prove $BB^+\mathbf{b}=\mathbf{b}$. Actually, we have $ BB^+\mathbf{b} = (A^TA)(A^TA)^+A^T\mathbf{y} = (A^TA)A^+\mathbf{y} =  A^T(AA^+)^T \mathbf{y} =  A^T (A^T)^+ A^T \mathbf{y} = A^T \mathbf{y} = \mathbf{b}$, thus $ \mathit{B}\widetilde{\mathbf{R}}=\mathbf{b} $ is always solvable, and the solution $ \widetilde{\mathbf{R}}^\ast $ is given in \eqref{eq51}.
	
	${\widetilde{\mathbf{R}}}^\ast$ can be a single point, the whole $\mathbb{R}^{K^\prime d}$ sapce or a linear subspace in $\mathbb{R}^{K^\prime d}$, corresponding to $rank(B)=\ K^\prime d$, $\left(I-B^+B\right)$ is of full rank or not respectively. 
	
	For $j=j_{K^\prime+1},\cdots,j_K$ with $\frac{\partial L}{\partial\mathbf{R}_j}\neq0, $ \eqref{A.3} should be satisfied, resulting in 	
	\begin{equation*}
	\sum_{i=1}^\mathit{N}[(\sum_{k=j_1}^{j_{K^\prime}}{I_{ik}\mathbf{R}_k^\ast}{\cdot\mathbf{x}}_i-y_i)\mathit{I}_{ij}\mathbf{x}_i]\cdot\mathbf{w}_j^\ast=0
	\end{equation*}	
	Defining error $e_i=\sum_{k=j_1}^{j_{K^\prime}}{I_{ik}\mathbf{R}_k^\ast}{\cdot\mathbf{x}}_i-y_i$,
	we have 	
	\begin{equation*}	[\sum_{i=1}^{N}\mathit{e}_i\mathit{I}_{ij}\mathbf{x}_i]\cdot\mathbf{w}_j^\ast=0, \quad j=j_{K^\prime+1},\cdots,j_K
	\end{equation*}   	
	Therefore, $\mathbf{w}_j^\ast$ is on a hyperplane that passes the origin in the space of $\mathbf{w}$. \eqref{eq51} and \eqref{eq15} constitute the necessary conditions that saddle points $\left \{z_j^\ast,\mathbf{w}_j^\ast \right \} $ must satisfy. \\
	
	Now we proceed to prove that \eqref{eq51} and \eqref{eq15} are also sufficient for the existence of saddle points. Our approach is to prove that there exist both ascent and descent directions at points $\left \{z_j^\ast,\mathbf{w}_j^\ast \right \} $ found by \eqref{eq51} and \eqref{eq15}. For any $ k $ such that $\frac{\partial L}{\partial\mathbf{R}_k}\neq0\ ($thus$\ z_k^\ast=0)$, we perturbate $z_k^\ast$ and $\mathbf{w}_k^\ast$ respectively as follows: $0\rightarrow\delta z_k,\ \mathbf{w}_k^\ast\rightarrow\mathbf{w}_k^\ast+\delta\mathbf{w}_k$. The loss function $\mathit{L}$ after perturbation is
	\begin{equation} \label{C.8}
	\begin{split}
	L^\prime&=\sum_{i=1}^N[\sum_{j=j_1}^{j_{K^\prime}}I_{ij}\mathbf{R}_j^\ast\cdot\mathbf{x}_i+I_{ik}\delta z_k\cdot\left(\mathbf{w}_k^\ast+\delta\mathbf{w}_k\right)\cdot\mathbf{x}_i-y_i]^2 \\
	&= \sum_{i=1}^N[e_i+I_{ik}\delta z_k\cdot\left(\mathbf{w}_k^\ast+\delta\mathbf{w}_k\right){\cdot\mathbf{x}}_i]^2 \\
	&= L+2\sum_{i=1}^{N}{e_i}I_{ik}\mathbf{w}_k^\ast \cdot\mathbf{x}_i\delta z_k+2\sum_{i=1}^{N}	{{e_i}I_{ik}\delta z_k\delta\mathbf{w}_k}\cdot\mathbf{x}_i \\ & \ \ +\sum_{i=1}^{N}{I_{ik}\delta z_k^2}	{(\mathbf{w}_k^\ast\cdot\mathbf{x}_i)}^2, 
	\end{split}
	\end{equation}
	where we have used $I_{ik}^2=I_{ik}$ and ignored terms higher than 2nd-order. Applying \eqref{eq15}, we get
	\begin{equation}\label{C.9}
	\bigtriangleup L=L^\prime - L=2\sum_{i=1}^Ne_iI_{ik}{\delta z_k}{\delta \mathbf{w}_k}\cdot\mathbf{x}_i+
	\sum_{i=1}^{N}I_{ik}{\delta {z_k}^2}\left(\mathbf{w}_k^\ast\cdot\mathbf{x}_i\right)^2
	\end{equation}	
	Only 2nd-order terms remain in \eqref{C.9}. If $\delta z_k$ is very small and $\delta\mathbf{w}_k$ not too small, we only need to consider the term $\sum_{i=1}^{N}{e_iI_{ik}\delta z_k\delta\mathbf{w}_k}{\cdot\mathbf{x}}_i$. Notice that $I_{ik}$ can not be zero for all $i\in[N]$, otherwise $\frac{\partial L}{\partial\mathbf{R}_k}=\sum_{i=1}^{N}{\left(\sum_{j=1}^{K}{I_{ij}\mathbf{R}_j\cdot\mathbf{x}_i}-y_i\right)\cdot I_{ik}\mathbf{x}_i}=0$, contradicting our assumption that $\frac{\partial L}{\partial\mathbf{R}_k}\neq0$. Therefore, setting $\delta z_k>0$, we can make $\bigtriangleup{L}_<^> 0$ by setting $(\sum_{i=1}^{N}{e_iI_{ik}}\mathbf{x}_i)\cdot\delta\mathbf{w}_k{_<^>}\ 0$ with appropriate $\delta\mathbf{w}_k$, indicating both ascent and descent directions exist. Therefore, $\left \{z_j^\ast,\mathbf{w}_j^\ast \right \} $ found by \eqref{eq51} and \eqref{eq15} are saddle points.                   
\end{proof}

\subsection{ Conditions for Existence of Genuine Differentiable Saddle Points}
\label{section5.2}
Like differentiable local minima, differentiable saddle points found by \eqref{eq51} and \eqref{eq15} may be outside their defining cells. The criteria for existence of genuine saddle points can be derived in a similar way as those for differentiable loacl minima. The main difference with differentiable local minima is that although ${\widetilde{\mathbf{R}}}^\ast$ can be a single point, whole space or a linear subspace, $\mathbf{w}_j^\ast\ (j=j_{K^\prime+1},\cdots,j_K)$ are on hyperplanes and hence one need to test their intersections with corresponding defining cells. Only when all $\mathbf{w}_j^\ast\ (j=1,2,\cdots,K)$ are inside their defining cells, there exist genuine differentiable saddle points.

\section{Non-differentiable Local Minima}\label{section6}
After understanding the local landscapes inside cells, we now turn our focus to local minima that lie on cell boundaries. 

We consider the case in which a weight vector lies on the boundary of two cells and thus the loss function \textit{L} in \eqref{eq2} is non-differentiable. Suppose $\mathbf{w}_m$ is located on the boundary of cell 1 and cell 2, separated by a sample $\mathbf{x}_n$, see Fig.3. We are going to give the necessary and sufficient conditions for $\left\lbrace z_j\  (j\in [K]);\ \mathbf{w}_j\ (j\in [K],j\neq m),\mathbf{w}_m \right\rbrace $ to be a non-differentiable local minimum.

$\frac{\partial L}{\partial\mathbf{w}_m}$ is non-differentiable and may be not equal to zero. In the following lemma, we first give the constraints on $\frac{\partial L}{\partial\mathbf{w}_m}$ in order for $\mathbf{w}_m$ to be part of a local minimum.

\begin{lemma}
	Suppose ${\widetilde {\mathbf{w}}}_m$ lies on the boundary of cell 1 and cell 2 seperated by a sample $\mathbf{x}_n$, where cell 2 is on the positive side of $\mathbf{x}_n$ and cell 1 on the negative side. ${\widetilde{\mathbf{w}}}_m$ is a on non-differentiable minimum if and only if 
	\begin{equation}\label{D.1}
	\left(\lim_{\mathbf{w}_m\to{\widetilde{\mathbf{w}}}_m}{\frac{\partial L}{\partial\mathbf{w}_m}|_1}\right)//(-\mathbf{x}_n)\ \ and\ \ \left(\lim_{\mathbf{w}_m\to{\widetilde{\mathbf{w}}}_m}{\frac{\partial L}{\partial\mathbf{w}_m}|_2}\right)//\mathbf{x}_n ,              
	\end{equation}
	where $\mathbf{a}//\mathbf{b}$ denotes vectors $\mathbf{a}$ and $\mathbf{b}$ are in the same direction, $\frac{\partial L}{\partial\mathbf{w}_m}|_1$ means $\frac{\partial L}{\partial\mathbf{w}_m}$ in cell 1.
\end{lemma}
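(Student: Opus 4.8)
The plan is to localize the problem and recognize that, near $\widetilde{\mathbf{w}}_m$, the loss is two smooth \emph{convex} functions of $\mathbf{w}_m$ glued along the hyperplane $H=\{\mathbf{w}_m:\mathbf{w}_m\cdot\mathbf{x}_n=0\}$, so that the assertion reduces to a first-order optimality condition on a half-space. Concretely, I would first assume $\widetilde{\mathbf{w}}_m$ lies in the relative interior of the common facet of cell 1 and cell 2 and that $z_m\neq 0$, so that on a small ball $B$ around $\widetilde{\mathbf{w}}_m$ the only sample hyperplane met is $H$, with cell $1\cap B=\{\mathbf{w}_m\cdot\mathbf{x}_n\le 0\}\cap B$ and cell $2\cap B=\{\mathbf{w}_m\cdot\mathbf{x}_n\ge 0\}\cap B$. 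Freezing the remaining parameters and the activation indicators at the pattern of cell $\ell$ turns the loss into a function $L_\ell(\mathbf{w}_m)$ given by the same formula as \eqref{eq2}; since with $z_m$ fixed $\mathbf{R}_m=z_m\mathbf{w}_m$ is affine in $\mathbf{w}_m$, the convexity of $L(\mathbf{R})$ (Lemma A1) makes each $L_\ell$ a smooth convex function on all of $\mathbb{R}^d$. On $B$ we have $L=L_1$ on cell 1 and $L=L_2$ on cell 2; moreover $L_1$ and $L_2$ differ only in the $\mathbf{x}_n$-term $\tfrac1N\,l(\cdots+I_{nm}z_m\,\mathbf{w}_m\cdot\mathbf{x}_n,y_n)$, whose $\mathbf{w}_m$-dependent part vanishes on $H$, so $L_1\equiv L_2$ on $H$ (in particular their tangential gradients at $\widetilde{\mathbf{w}}_m$ agree). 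Finally, by smoothness of $L_\ell$, $\lim_{\mathbf{w}_m\to\widetilde{\mathbf{w}}_m}\frac{\partial L}{\partial\mathbf{w}_m}\big|_\ell=\nabla L_\ell(\widetilde{\mathbf{w}}_m)$, so \eqref{D.1} says precisely $\nabla L_1(\widetilde{\mathbf{w}}_m)=-\lambda\mathbf{x}_n$ and $\nabla L_2(\widetilde{\mathbf{w}}_m)=\mu\mathbf{x}_n$ for some $\lambda,\mu\ge 0$.

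For necessity I would argue: if $\widetilde{\mathbf{w}}_m$ is a (non-differentiable) local minimum, then it minimizes the smooth function $L_1$ over the closed half-space $\{\mathbf{w}_m\cdot\mathbf{x}_n\le 0\}$ restricted to $B$, so the standard first-order necessary condition gives $\nabla L_1(\widetilde{\mathbf{w}}_m)\cdot\mathbf{u}\ge 0$ for every feasible direction $\mathbf{u}$, i.e. for all $\mathbf{u}$ with $\mathbf{u}\cdot\mathbf{x}_n\le 0$; a Farkas-type argument then forces $\nabla L_1(\widetilde{\mathbf{w}}_m)$ to be a nonnegative multiple of $-\mathbf{x}_n$, which is exactly $\nabla L_1(\widetilde{\mathbf{w}}_m)//(-\mathbf{x}_n)$, and the symmetric argument on cell 2 gives $\nabla L_2(\widetilde{\mathbf{w}}_m)//\mathbf{x}_n$. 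For sufficiency I would use convexity: writing $\nabla L_1(\widetilde{\mathbf{w}}_m)=-\lambda\mathbf{x}_n$ with $\lambda\ge0$, the gradient inequality gives, for $\mathbf{w}_m$ in cell $1\cap B$, $L_1(\mathbf{w}_m)\ge L_1(\widetilde{\mathbf{w}}_m)-\lambda\,\mathbf{w}_m\cdot\mathbf{x}_n\ge L_1(\widetilde{\mathbf{w}}_m)$ since $\mathbf{w}_m\cdot\mathbf{x}_n\le 0$ there; likewise $L_2(\mathbf{w}_m)\ge L_2(\widetilde{\mathbf{w}}_m)$ on cell $2\cap B$. Because $L=L_1$ on one side, $L=L_2$ on the other, and $L_1(\widetilde{\mathbf{w}}_m)=L_2(\widetilde{\mathbf{w}}_m)=L(\widetilde{\mathbf{w}}_m)$, this shows $L(\mathbf{w}_m)\ge L(\widetilde{\mathbf{w}}_m)$ throughout $B$, hence $\widetilde{\mathbf{w}}_m$ is a local minimum.

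The step I expect to be the crux is the sufficiency direction: bare first-order ``parallel'' conditions do not by themselves guarantee a local minimum for a piecewise-smooth function, so the argument genuinely needs that each piece $L_\ell$ is convex, which in turn rests on the affine substitution $\mathbf{R}_m=z_m\mathbf{w}_m$ (with $z_m$ held fixed) combined with Lemma A1 — losing convexity here would break the ``if'' half of the lemma. Two bookkeeping points also deserve care: the meaning of $//$ when $\nabla L_\ell(\widetilde{\mathbf{w}}_m)=0$ (the degenerate but harmless case $\lambda=0$ or $\mu=0$), and the tacit regularity assumption that $\widetilde{\mathbf{w}}_m$ sits in the relative interior of the facet shared by cells 1 and 2, so that locally the geometry is exactly two half-spaces meeting along $H$; if $\widetilde{\mathbf{w}}_m$ also lay on other sample hyperplanes one would have to intersect with those extra constraints and the clean statement \eqref{D.1} would no longer capture the whole picture.
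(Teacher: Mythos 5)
Your proposal is correct, and its necessity half is essentially the paper's argument: the paper, too, derives $\frac{\partial L}{\partial\mathbf{w}_m}\big|_2\,//\,\mathbf{x}_n$ by exhibiting a feasible descent direction in cell 2 whenever the limiting gradient is not a nonnegative multiple of $\mathbf{x}_n$ (your Farkas-type step is just the formal version of the paper's geometric picture). Where you genuinely differ is the sufficiency direction. The paper argues only to first order: it writes the Taylor expansion $L(\widetilde{\mathbf{w}}_m+\Delta\mathbf{w}_m)=L(\widetilde{\mathbf{w}}_m)+\frac{\partial L}{\partial\mathbf{w}_m}(\widetilde{\mathbf{w}}_m)\cdot\Delta\mathbf{w}_m$ and concludes from $\frac{\partial L}{\partial\mathbf{w}_m}\cdot\Delta\mathbf{w}_m\ge 0$ that the loss cannot decrease; as you note, for a general piecewise-smooth function this is not by itself conclusive, since directions tangent to the hyperplane (or the degenerate case of a vanishing limiting gradient) give zero first-order change and higher-order terms could still decrease the loss. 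Your route closes this by observing that, with $z_m$ and the other parameters frozen and the activation pattern fixed per cell, each piece $L_\ell$ is convex in $\mathbf{w}_m$ (affine substitution $\mathbf{R}_m=z_m\mathbf{w}_m$ plus Lemma A1), so the gradient inequality yields a genuine local-minimality proof on each half-space, glued via $L_1\equiv L_2$ on the separating hyperplane. This buys rigor the paper's sketch lacks, at the price of explicitly invoking convexity of the per-cell loss and the regularity assumption (which the paper leaves tacit) that $\widetilde{\mathbf{w}}_m$ lies in the relative interior of the shared facet, i.e. on no other sample hyperplane.
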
 

In other words, at non-differentiable local minima $\frac{\partial L}{\partial\mathbf{w}_m}|_1$ and $\frac{\partial L}{\partial\mathbf{w}_m}|_2$ are perpendicular to the hyperplane of $\mathbf{x}_n$ and have opposite directions. Since $ L $ is indifferentiable w.r.t. $\mathbf{w}_m \ at \ {\widetilde{\mathbf{w}}}_m$, we use the limit.
\begin{figure}[ht]
	\vskip 0.2in
	\begin{center}
		\centerline{\includegraphics[width=5.0cm]{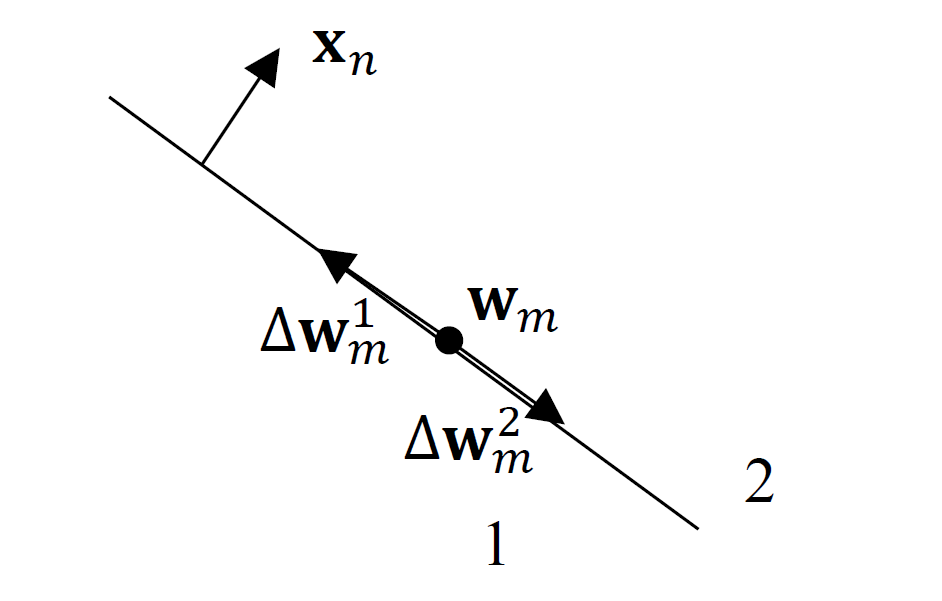}}
		\caption{A non-differentiable local minimum $\mathbf{w}_m$ lying on the cell boundary defined by a sample $\mathbf{x}_n$.}
		\label{one pic}
	\end{center}
	\vskip -0.2in
\end{figure}

\begin{proof}[\indent \bfseries Proof of Lemma 3]
	By 1st-order Taylor expanssion, $L\left(\widetilde{\mathbf{w}}_m+\bigtriangleup\mathbf{w}_m\right)=L\left(\widetilde{\mathbf{w}}_m\right)+\frac{\partial L}{\partial\mathbf{w}_m}({\widetilde{\mathbf{w}}}_m)\cdot\bigtriangleup\mathbf{w}_m$. Here we omit other variables in $L$ and only perturbate $\mathbf{w}_m$. If ${\widetilde{\mathbf{w}}}_m$ is on a local minimum, any perturbation $\bigtriangleup\mathbf{w}_m$ should not cause $L$ to decrease, i.e., $\frac{\partial L}{\partial\mathbf{w}_m}({\widetilde{\mathbf{w}}}_m)\cdot\bigtriangleup\mathbf{w}_m\geq 0.$ If $\frac{\partial L}{\partial\mathbf{w}_m}|_2\left(\widetilde{\mathbf{w}}_m\right):= \lim_{\mathbf{w}_m\to{\widetilde{\mathbf{w}}}_m}{\frac{\partial L}{\partial\mathbf{w}_m}|_2}$ 
	is not in the direction of $\mathbf{x}_n$, one can always find $\bigtriangleup\mathbf{w}_m$ such that $\frac{\partial L}{\partial\mathbf{w}_m}|_2\left({\widetilde{\mathbf{w}}}_m\right)\cdot\bigtriangleup\mathbf{w}_m<0$, such as either $\bigtriangleup\mathbf{w}_m^1$ or $\bigtriangleup\mathbf{w}_m^2$ in Fig. 3, indicating descent directions exist in cell 2 and contradicting the assumption that ${\widetilde{\mathbf{w}}}_m$ is on a local minimum. Therefore, we have $\frac{\partial L}{\partial\mathbf{w}_m}|_2//\mathbf{x}_n.$ $\ \frac{\partial L}{\partial\mathbf{w}_m}|_1//(-\mathbf{x}_n)$ can be proved in a similar way.  
	
	On the other hand, if \eqref{D.1} holds, any $\bigtriangleup\mathbf{w}_m$ will increase or keep the loss by 
	$\frac{\partial L}{\partial\mathbf{w}_m}\left(\widetilde{\mathbf{w}}_m\right) \cdot \bigtriangleup\mathbf{w}_m\geq0.$ 
	Therefore, \eqref{D.1} is sufficient for ${\widetilde{\mathbf{w}}}_m$ to be on a local minimum.                        
\end{proof}                                                          

The conditions for existence of non-differentiable local minima are given by the following theorem.

\begin{thm}
	For the loss in \eqref{eq1} with l being the squared loss, there exist non-differentiable local minima $\left\lbrace z^\ast_j\  (j\in [K]);\ \mathbf{w}^\ast_j\ (j\in [K],j\neq m),\mathbf{w}^\ast_m \right\rbrace $, where $ \mathbf{w}^\ast_m $ is located on the boundary of two cells seperated by a sample $\mathbf{x}_n$, if and only if the linear system D$\mathbf{R}=\mathbf{d}$ is solvable, where $\mathbf{R}=\left(\mathbf{R}_1^T,\mathbf{R}_2^T,\cdots,\mathbf{R}_K^T\right)^T,\ \ D\in\mathbb{R}^{(K+1)d\times K d}$ is a matrix with the following block components,
	\begin{equation}\label{D.2}
	\begin{split}
	&D\left(j,k\right)=\sum_{i} I_{ij}I_{ik}\mathbf{x}_i\mathbf{x}_i^T\quad \left(j,k\in[K]; j\neq m\right), \\
	&D\left(m,k\right)=\sum_{i\neq n} I_{im}I_{ik}{(\mathbf{x}}_i\cdot \mathbf{x}_n\mathbf{x}_n-\left|	\mathbf{x}_n\right|^2\mathbf{x}_i)\mathbf{x}_i^T\quad (k\in[K]) \\
	&D\left(K+1,m\right)=\mathbf{x}_n^T ,\ \ \ \  D\left(K+1,k\right)=\mathbf{0}\ (k\in[K]; k\neq m),       
	\end{split}
	\end{equation}
	and $\mathbf{d}{\in\mathbb{R}}^{(K+1)d}$ is a block vector with the following block components,
	\begin{equation}\label{D.3}
	\begin{split}
	&\mathbf{d}\left(j\right)=\sum_i{I_{ij}y_i\mathbf{x}_i}\ \ \ \ (j\in[K]; j\neq m) \\
	&\mathbf{d}\left(m\right)=\sum_{i\neq n}{I_{im}y_i\left(\mathbf{x}_i\cdot\mathbf{x}_n\mathbf{x}_n-	\left|\mathbf{x}_n\right|^2\mathbf{x}_i\right)},\\ 
	&\mathbf{d}\left(K+1\right)=0,    
	\end{split}
	\end{equation}
	and its solution $\mathbf{R}^\ast$ satisfies the following two inequalities for either $z_m>0$ or $z_m<0$,
	\begin{equation}\label{D.4}
	\ \ \sum_{i\neq n}{\left[(\sum_{k}{I_{ik}\mathbf{R}^\ast_k\cdot\mathbf{x}_i-y_i)I_{im}\mathbf{x}_i\cdot\mathbf{x}_n}\right]z_m<0},                 
	\end{equation}
	\begin{equation}\label{D.5}
	\begin{split}
	&\sum_{i\neq n}{\left[(\sum_{k}{I_{ik}\mathbf{R}^\ast_k\cdot\mathbf{x}_i-y_i)I_{im}\mathbf{x}_i\cdot\mathbf{x}_n}\right]z_m} \\ +&\left[(\sum_{k}{I_{nk}\mathbf{R}^\ast_k\cdot\mathbf{x}_n-y_n)\left|\mathbf{x}_n\right|^2}\right]z_m>0 .  
	\end{split}
	\end{equation}
\end{thm}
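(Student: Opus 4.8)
The plan is to convert the geometric optimality criterion of Lemma~2 (equation~\eqref{D.1}) --- together with the ordinary differentiable stationarity conditions for the neurons $j\neq m$ and the constraint that $\widetilde{\mathbf{w}}_m$ lies on the face $\mathbf{x}_n^{\perp}$ --- into the single linear system $D\mathbf{R}=\mathbf{d}$ plus the sign inequalities \eqref{D.4}--\eqref{D.5}. Throughout I assume $\widetilde{\mathbf{w}}_m$ lies in the relative interior of the shared face of cell~1 and cell~2, so that locally only $I_{nm}$ changes (from $0$ on the negative side of $\mathbf{x}_n$ to $1$ on the positive side) while every other $I_{ij}$ is constant; lower-dimensional faces are treated identically with extra constraint rows. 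Since $\mathbf{R}_j=z_j\mathbf{w}_j$, I first reuse the $z_j=0$ perturbation argument from the proof of Lemma~1 (around \eqref{A.3}--\eqref{A.4}) to deduce that a local minimum forces $\frac{\partial L}{\partial\mathbf{R}_j}=0$ for every $j\neq m$; expanding $\frac{\partial L}{\partial\mathbf{R}_j}=\sum_i\big(\sum_k I_{ik}\mathbf{x}_i^{T}\mathbf{R}_k-y_i\big)I_{ij}\mathbf{x}_i=0$ is exactly the block row $D(j,\cdot)\mathbf{R}=\mathbf{d}(j)$. The boundary condition $\mathbf{w}_m\cdot\mathbf{x}_n=0$, i.e.\ $\mathbf{x}_n^{T}\mathbf{R}_m=0$, is the $(K+1)$-st block row $D(K+1,\cdot)\mathbf{R}=\mathbf{d}(K+1)=0$.

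I then handle the $\mathbf{w}_m$-direction. Writing $e_i=\sum_k I_{ik}\mathbf{x}_i^{T}\mathbf{R}_k-y_i$ and using that neuron $m$ contributes nothing to sample $n$ on the boundary, the one-sided limits are $\lim_{\mathbf{w}_m\to\widetilde{\mathbf{w}}_m}\frac{\partial L}{\partial\mathbf{w}_m}\big|_1=z_m\sum_{i\neq n}e_i I_{im}\mathbf{x}_i$ and $\lim_{\mathbf{w}_m\to\widetilde{\mathbf{w}}_m}\frac{\partial L}{\partial\mathbf{w}_m}\big|_2=z_m\big(\sum_{i\neq n}e_i I_{im}\mathbf{x}_i+e_n\mathbf{x}_n\big)$, which differ only by a multiple of $\mathbf{x}_n$. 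Consequently the two parallelism requirements in \eqref{D.1} collapse to the single requirement that $\sum_{i\neq n}e_i I_{im}\mathbf{x}_i$ be parallel to $\mathbf{x}_n$, equivalently that its component orthogonal to $\mathbf{x}_n$ vanish: $\big(I-\mathbf{x}_n\mathbf{x}_n^{T}/|\mathbf{x}_n|^{2}\big)\sum_{i\neq n}e_i I_{im}\mathbf{x}_i=0$. Multiplying through by $-|\mathbf{x}_n|^{2}$ and substituting the $\mathbf{R}$-linear form of $e_i$ yields precisely the middle row $D(m,\cdot)\mathbf{R}=\mathbf{d}(m)$ of \eqref{D.2}--\eqref{D.3}. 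Stacking the three groups of rows produces $D\mathbf{R}=\mathbf{d}$, which is solvable iff $DD^{+}\mathbf{d}=\mathbf{d}$, with general solution $\mathbf{R}^{\ast}=D^{+}\mathbf{d}+(I-D^{+}D)\mathbf{c}$.

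It remains to extract the sign/orientation constraints. Reading $\mathbf{a}//\mathbf{b}$ as the relation that $\mathbf{a}$ is a positive multiple of $\mathbf{b}$, I write $\frac{\partial L}{\partial\mathbf{w}_m}\big|_1=-\lambda_1\mathbf{x}_n$ and $\frac{\partial L}{\partial\mathbf{w}_m}\big|_2=\lambda_2\mathbf{x}_n$; taking inner products with $\mathbf{x}_n$ and using $|\mathbf{x}_n|^{2}>0$ shows that $\lambda_1>0$ is equivalent to \eqref{D.4} and, after adding back the $e_n\mathbf{x}_n$ term, that $\lambda_2>0$ is equivalent to \eqref{D.5}. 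Since both inequalities carry the factor $z_m$, the two sign choices of $z_m$ realize the two possible orientations of $\widetilde{\mathbf{w}}_m$ relative to the face, which is why the statement requires only that one of the branches $z_m>0$ or $z_m<0$ satisfy them. When $\mathbf{R}^{\ast}$ is not unique, substituting $\mathbf{R}^{\ast}=D^{+}\mathbf{d}+(I-D^{+}D)\mathbf{c}$ turns \eqref{D.4}--\eqref{D.5} into two half-spaces in $\mathbf{c}$, so a non-differentiable local minimum exists iff their intersection (a convex polyhedron) is nonempty. For sufficiency I will not merely verify stationarity: restricting $L$ to the closed half-region $\{\,\mathrm{sign}(z_m)\,\mathbf{x}_n^{T}\mathbf{R}_m\le 0\,\}$, on which it is a smooth convex function of $\mathbf{R}$ by Lemma~A1 / Theorem~1, the assembled gradient conditions are exactly the KKT conditions of that constrained convex program; hence $\widetilde{\mathbf{w}}_m$ globally minimizes $L$ over that half-region, and symmetrically over the complementary one, so it minimizes $L$ over an entire neighborhood.

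I expect the main obstacle to be the bookkeeping of the one-sided derivatives $\frac{\partial L}{\partial\mathbf{w}_m}\big|_1$ and $\frac{\partial L}{\partial\mathbf{w}_m}\big|_2$: that the sample-$n$ term enters the cell-2 derivative with weight $I_{nm}=1$ yet is absent from the cell-1 derivative even though the network output is continuous across the boundary, and the algebra that recasts ``orthogonal component vanishes'' as the particular block $D(m,\cdot)$ of \eqref{D.2}. A secondary care point is the two-way translation between the $(z,\mathbf{w})$ and $\mathbf{R}$ variables --- invoking the Lemma~1 perturbation trick for the $z_j=0$ neurons and tracking how the sign of $z_m$ produces the two branches --- together with the convexity/KKT argument needed to upgrade the algebraic and inequality conditions into a genuine local-minimum certificate rather than a mere stationary configuration.
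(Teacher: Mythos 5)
Your proposal is correct and follows the paper's own derivation almost step for step: stationarity in the $\mathbf{R}_j$ variables for $j\neq m$ (via the Lemma~1 perturbation trick) gives the rows $D(j,\cdot)$, the boundary constraint $\mathbf{x}_n^{T}\mathbf{R}_m=0$ gives row $K+1$, and Lemma~2's parallelism condition—linearized by killing the component of $\sum_{i\neq n}e_iI_{im}\mathbf{x}_i$ orthogonal to $\mathbf{x}_n$, which after multiplying by $|\mathbf{x}_n|^{2}$ is exactly the paper's \eqref{D.16}—gives row $m$, with \eqref{D.4}--\eqref{D.5} coming from the sign conditions $\frac{\partial L}{\partial\mathbf{w}_m}|_1\cdot\mathbf{x}_n<0$, $\frac{\partial L}{\partial\mathbf{w}_m}|_2\cdot\mathbf{x}_n>0$ and the two $z_m$ branches, and the half-space treatment of a non-unique $\mathbf{R}^{\ast}$ matching \eqref{D.21}. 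The one genuine difference is the sufficiency step: the paper certifies local minimality by invoking Lemma~2's first-order characterization (together with checking, via its \eqref{D.12}, that $\partial L/\partial z_m=0$ is automatically implied by the parallelism and boundary conditions and so adds no row), whereas you restrict $L$ to the two closed half-regions $\{\pm\,\mathrm{sign}(z_m)\,\mathbf{x}_n^{T}\mathbf{R}_m\le 0\}$, use convexity (Lemma~A1) and read the assembled conditions as KKT conditions of each constrained convex program, obtaining global minimality on each half-region and hence minimality on a full neighborhood. This is a slightly stronger and cleaner certificate than the paper's first-order argument, and it also subsumes the $\partial L/\partial z_m=0$ check you never perform explicitly, since your neighborhood argument in $\mathbf{R}$-space covers perturbations of $z_m$ as well; just make sure, when you write it out, to note $z_m\neq0$ (as the paper does) before dividing it out of the parallelism condition, and to state that the genuineness of the cells for the $\mathbf{w}_j$, $j\neq m$, is deferred exactly as in the paper's Appendix D.2.
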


\begin{proof}[\indent \bfseries Proof]
	At non-differentiable local minima, we have
	\begin{equation}\label{D.6}
	\frac{\partial L}{\partial z_j}=\frac{\partial L}{\partial\mathbf{w}_j}=0\ \ \ (j\in[K];j \neq m)
	\end{equation}
	\begin{equation}\label{D.7}
	\frac{\partial L}{\partial z_m}=0,
	\end{equation}	
	due to these derivatives are differentiable. Similar to Lemma 1, \eqref{D.6} leads to
	\begin{equation}\label{D.8}
	\frac{\partial L}{\partial\mathbf{R}_j}=0 \ \ (j \in [K];j\neq m).
	\end{equation}
	$\mathbf{w}_m$ is on the hyperplane of $\mathbf{x}_n$ means
	\begin{equation}\label{D.9}
	\mathbf{w}_m\cdot\mathbf{x}_n=0.
	\end{equation}	
	\eqref{D.7},\eqref{D.8},\eqref{D.1} and \eqref{D.9} constitute the necessary and sufficient conditions for non-differentiable local minima. We now write them in detailed forms. \\
	
	First, the derivatives $\frac{\partial L}{\partial\mathbf{w}_m}\mid_1 $ and $\frac{\partial L}{\partial\mathbf{w}_m}\mid_2$ are 
	\begin{equation}\label{D.10}
	\frac{\partial L}{\partial\mathbf{w}_m}\mid_1=\sum_{i\neq n}\left[(\sum_{k}{I_{ik}\mathbf{R}_k\cdot\mathbf{x}_i-y_i)I_{im}\mathbf{x}_i}\right]\cdot z_m
	\end{equation}
	\begin{equation}\label{D.11}
	\frac{\partial L}{\partial\mathbf{w}_m}\mid_2=\frac{\partial L}{\partial\mathbf{w}_m}|_1+\left[(\sum_{k}{I_{nk}\mathbf{R}_k\cdot\mathbf{x}_n-y_n)\mathbf{x}_n}\right]\cdot z_m   
	\end{equation}
	\eqref{D.7} yields
	\begin{equation}\label{D.12}
	\sum_{i=1}^{N}\left[(\sum_{k} I_{ik}\mathbf{R}_k\cdot\mathbf{x}_i-y_i)I_{im}\mathbf{x}_i\right]\cdot\mathbf{w}_m=0,
	\end{equation}	
	which involves quadratic term of $\mathbf{w}$. Fortunately, the left side of \eqref{D.12} is actually $\frac{1}{z_m}\frac{\partial L}{\partial\mathbf{w}_m}|_2\cdot\mathbf{w}_m$. Notice that $z_m\neq0$ at local minima, otherwise $\frac{\partial L}{\partial\mathbf{w}_m}=0$ by \eqref{D.10} and \eqref{D.11} and $\mathbf{w}_m$ would have been treated like differentiable local minima. Combining \eqref{D.1} and \eqref{D.9} results in $\frac{\partial L}{\partial\mathbf{w}_m}\mid_2 \cdot\mathbf{w}_m=0$, thus we conclude that \eqref{D.12} is satisfied automatically and impose no additional constraint at all. \\
	
	\eqref{D.8} implies
	\begin{equation}\label{D.13}
	\sum_{i=1}^{N}\left[(\sum_{k}I_{ik}\mathbf{R}_k\cdot\mathbf{x}_i-y_i)I_{ij}\mathbf{x}_{i}\right]=0 \ \ \ \ (j \in [K]; j\neq m)
	\end{equation}
	
	\eqref{D.1} can be expressed by 
	\begin{equation}\label{D.14}
	\left(\frac{\partial L}{\partial\mathbf{w}_m}|_1\cdot\mathbf{x}_n\right)\mathbf{x}_n=\frac{\partial L}{\partial\mathbf{w}_m}|_1\cdot\left|\mathbf{x}_n\right|^2
	\end{equation}
	and the inequalities
	\begin{equation}\label{D.15}
	\frac{\partial L}{\partial\mathbf{w}_m}|_1\cdot\mathbf{x}_n<0,\ \frac{\partial L}{\partial\mathbf{w}_m}|_2\cdot\mathbf{x}_n>0.
	\end{equation}
	\eqref{D.14} indicates $\frac{\partial L}{\partial\mathbf{w}_m}|_1$ is parallel to $\mathbf{x}_n$, so is $\frac{\partial L}{\partial\mathbf{w}_m}|_2$ by \eqref{D.11}. \eqref{D.15} ensures $\mathbf{w}_m$ is a local minima (and cannot be local maximum or saddle point). Take $z_m\neq0$ into account, \eqref{D.14} can be written as the following form
	\begin{equation}\label{D.16}
	\sum_{i\neq n}{\left[(\sum_{k}{I_{ik}\mathbf{R}_k\cdot\mathbf{x}_i-y_i)}I_{im}(\mathbf{x}_i\cdot\mathbf{x}_n\mathbf{x}_n-\left|\mathbf{x}_n\right|^2\mathbf{x}_i)\right]=0\ }
	\end{equation}
	\eqref{D.9} can be transformed into 
	\begin{equation}\label{D.17}
	\mathbf{x}_n\cdot\mathbf{R}_m=0
	\end{equation}
	
	The inequalities in \eqref{D.15} are expanded as
	\begin{equation*}
	\sum_{i\neq n}{\left[(\sum_{k}{I_{ik}\mathbf{R}_k\cdot\mathbf{x}_i-y_i)}I_{im}\mathbf{x}_i\cdot\mathbf{x}_n\right]}z_m<0,
	\end{equation*}
	\begin{equation*}
	\begin{split}
	&\sum_{i\neq n}{\left[(\sum_{k}{I_{ik}\mathbf{R}_k\cdot\mathbf{x}_i-y_i)I_{im}\mathbf{x}_i\cdot\mathbf{x}_n}\right]z_m} \\
	+&\left[(\sum_{k}{I_{nk}\mathbf{R}_k\cdot\mathbf{x}_n-y_n)\left|\mathbf{x}_n\right|^2}\right]z_m>0,
	\end{split}
	\end{equation*}
	which has already appeared in \eqref{D.4} and \eqref{D.5}.
	
	Finally, \eqref{D.13}, \eqref{D.16} and \eqref{D.17} together form a linear system
	\begin{equation}\label{D.18}
	D\mathbf{R}=\mathbf{d}
	\end{equation}
	as defined in the statement of this theorem.
	The linear system \eqref{D.18} has solutions if and only if
	\begin{equation}\label{D.19}
	DD^+\mathbf{d}=\mathbf{d}
	\end{equation}
	
	If solvable, its general solution is 
	\begin{equation}\label{D.20}
	\mathbf{R}^\ast=D^+\mathbf{d}+\left(I-D^+D\right)\mathbf{c}\ \ \ (\mathbf{c}{\in\mathbb{R}}^{Kd}\ \textup{is arbitrary})
	\end{equation}
	If $ D$ is of full rank, then $\mathbf{R}^\ast=D^+\mathbf{d}$ is unique.\\
	
	We need to test whether the solution in \eqref{D.20} satisfies the constraints in \eqref{D.4} and \eqref{D.5}. If $\mathbf{R}^\ast$ is a single point, substituting $\mathbf{R}^\ast=D^+\mathbf{d}$ into \eqref{D.4} and \eqref{D.5}, then test with either $z_m>0$ or $z_m<0$. The magnitude of $z_m$ does not matter for \eqref{D.4} and \eqref{D.5}. Only if \eqref{D.19} holds, and the inequalities hold for  $z_m>0$ or $z_m<0$, there exist non-differentiable local minima. If $\mathbf{R}^\ast$ is a linear subspace of $\mathbb{R}^{Kd}$, substituting \eqref{D.20} into \eqref{D.4} and \eqref{D.5}, each inequality will define a half-space in $\mathbb{R}^{Kd}$. For example, \eqref{D.4} is transformed into 
	\begin{equation}\label{D.21}
	\begin{split}
	&\sum_{i\neq n}\left[(\sum_{k}{I_{ik}I_{im}\mathbf{x}_i^T\mathbf{x}_n\mathbf{x}_i^T{(I-D^+D)}_k}\right]z_m\cdot\mathbf{c} \\
	-&\sum_{i\neq n}\left[I_{im}y_i\mathbf{x}_i^T\mathbf{x}_n\right]z_m \\
	+&\sum_{i\neq n}{\left[\left(\sum_{k}{I_{ik}I_{im}\mathbf{x}_i^T\mathbf{x}_n\mathbf{x}_i^T{(D^+\mathbf{d})}_k}\right)\right]z_m<0},
	\end{split}	
	\end{equation}
	where\ ${(I-D^+D)}_k$ is the rows of $(I-D^+D)$ corresponding to $\mathbf{R}_k$, and so on.
\end{proof} 

\subsection{ Conditions for Existence of Genuine Non-differentiable Local Minima}\label{section6.1}
Like the case of differentiable local minima, existence of genuine non-differentiable local minima can be identified by testing against \eqref{eq11} and \eqref{eq12} if $\mathbf{R}^\ast$ is unique, or finding intersection of half-spaces like \eqref{eq13} if $\mathbf{R}^\ast$ is a linear subspace. The differences with differentiable local minima lie in that there is no need to test $\mathbf{w}_m^\ast $ since it is constrained on the  cell boundary, and instead the two inequalities \eqref{D.4} and \eqref{D.5} should be satisfied. If $\mathbf{R}^\ast$ is a linear subspace, the solutions to \eqref{eq13}, \eqref{D.4} and \eqref{D.5} can be obtained simultaneously by finding the intersection of corresponding half-spaces.

\section{Missing Proofs}\label{section7}

\setcounter{lemma}{1}
\begin{lemma}
	L$\left(\mathbf{R}_1,\mathbf{R}_2,\cdots\mathbf{R}_K\right)$ is convex inside cells if l is convex.
\end{lemma}

\begin{proof} [\indent \bfseries Proof]
	We will prove the convexity of L$\left(\mathbf{R}_1,\mathbf{R}_2,\cdots\mathbf{R}_K\right)$ by proving the positive definiteness of its Hessian. Notice that $\left \{ I_{ij},\ i\in [N], j\in [K] \right \} $ are constant inside cells. The derivative $\frac{\partial L}{\partial \mathbf{R}_{m}}=\frac{1}{N} \sum_{i=1}^{N} l^{\prime}\left(\sum_{j=1}^{K} I_{i j} \mathbf{R}_{j} \cdot \mathbf{x}_{i}, y_{i}\right) \cdot I_{i m} \mathbf{x}_{i}$ and the 2nd-order derivative is 
	\begin{equation}\label{f1}
	\frac{\partial^{2} L}{\partial \mathbf{R}_{m} \partial \mathbf{R}_{n}}=\frac{1}{N} \sum_{i=1}^{N} l^{\prime \prime} \cdot I_{i m} \mathbf{x}_{i} I_{i n} \mathbf{x}_{i}^{\mathrm{T}}
	\end{equation}
	Let $\mathbf{R}={(\mathbf{R}_1^T,\mathbf{R}_2^T,\cdots,\mathbf{R}_K^T)}^T$, then Hessian matrix $\frac{\partial^2L}{\partial\mathbf{R}^2}$ is a block matrix with block components $\frac{\partial^2L}{\partial\mathbf{R}_m\partial\mathbf{R}_n}$, $(m,n\in[K])$. Since $I_{im}$ is either 1 or 0, \eqref{f1} can be rewritten as
	\begin{equation}
	\begin{split}
	\frac{\partial^{2} L}{\partial \mathbf{R}_{m} \partial \mathbf{R}_{n}}=&\frac{1}{N} \sum_{i=1}^{N} l^{\prime \prime} \cdot\left(\begin{array}{c}
	{I_{i m} \cdot x_{i 1}} \\
	{I_{i m} \cdot x_{i 2}} \\
	{\vdots} \\
	{I_{i m} \cdot x_{i d}}
	\end{array}\right) \\
	&\left(I_{i n} \cdot x_{i 1} \quad I_{i n} \cdot x_{i 2} \quad \cdots \quad I_{i n} \cdot x_{i d}\right)
	\end{split}
	\end{equation}
	Defining $\mathbf{I}_{im}=\left(\begin{matrix}I_{im}&I_{im}\ldots.&I_{im}\\\end{matrix}\right)^T$ that repeats $I_{im}$ $ d $ times, and using the element-wise product $\odot$, there is $\left(\begin{array}{c}{I_{i m} \cdot x_{i 1}} \\ {I_{i m} \cdot x_{i 2}} \\ {\vdots} \\ {I_{i m} \cdot x_{i d}}\end{array}\right)=\mathbf{I}_{i m} \odot \mathbf{x}_{i}$. Then 
	\begin{equation*}
	\frac{\partial^{2} L}{\partial \mathbf{R}_{m} \partial \mathbf{R}_{n}}=\frac{1}{N} \sum_{i=1}^{N} l^{\prime \prime} \cdot \mathbf{I}_{i m} \odot \mathbf{x}_{i} \cdot\left(\mathbf{I}_{i n} \odot \mathbf{x}_{i}\right)^{\mathrm{T}}.
	\end{equation*}
	Let ${\widetilde{\mathbf{x}}}_i=\left(\begin{matrix}\begin{matrix}\mathbf{I}_{i1}\odot\mathbf{x}_i\\\mathbf{I}_{i2}\odot\mathbf{x}_i\\\end{matrix}\\\begin{matrix}\vdots\\\mathbf{I}_{iK}\odot\mathbf{x}_i\\\end{matrix}\\\end{matrix}\right)$, Hessian $\frac{\partial^2L}{\partial\mathbf{R}^2}$ can be transformed into 
	\begin{equation*}
	\frac{\partial^{2} L}{\partial \mathbf{R}^{2}}=\frac{1}{N} \sum_{i=1}^{N} l^{\prime \prime} \cdot \tilde{\mathbf{x}}_{i} \cdot \tilde{\mathbf{x}}_{i}^{\mathrm{T}}.
	\end{equation*}
	For arbitrary non-zero vector $\mathbf{u}{\in\mathbb{R}}^{Kd}$, the quadratic form $\mathbf{u}^{\mathrm{T}} \frac{\partial^{2} L}{\partial \mathbf{R}^{2}} \mathbf{u}=\frac{1}{N} \sum_{i=1}^{N} l^{\prime \prime} \cdot\left(\mathbf{u}^{\mathrm{T}} \tilde{\mathbf{x}}_{i}\right)^{2}$. By $l^{\prime\prime}>0$ due to convexity of $l$, the positive definiteness of Hessian and consequently the convexity of $L$ are obtained.  
\end{proof}

\section{Related Work}\label{section8}
\textbf{Loss landscape} Matrix completion and tensor decomposition, e.g., \cite{Matrixcompletion} are learning models involving the product of two unknown matrices, and it has been shown that all local minima are global for such models. Deep linear networks, which remove the non-linear activation function of each neuron in multi-layer perceptions, also have no spurious local minima according to \cite{Kawaguchi_nips16,HaihaoLu,Laurent18,GLOBALOPTIMALITY,Nouiehed,Zhang}. \cite{HardtMa} shows deep linear residual networks have no spurious local optima. \cite{Choromanska15} uses spin glass models in statistical physics to analyze the loss landscape which simplify the nonlinear nature of deep neural networks.

For one-hidden-layer over-parameterized networks with quadratic activation, \cite{Soltanolkotabi,QuadraticActivation} prove that all local minima are global. For one-hidden-layer ReLU networks, \cite{SoudryCarmon} gives the conditions under which loss at differentiable local minimum is zero (thus being global minimum). \cite{Multilinear} shows that ReLU networks with hinge loss can only have non-differentiable local minima and gives the conditions for their existence for linear separable data. \cite{SafranShamir16} shows that there is a high probability of initializing in a basin with small minimal loss for over-parameterized one-hidden-layer ReLU networks. \cite{SoudryHoffer} exhibits that, given standard Gaussian input data, the volume of differentiable regions containing sub-optimal differentiable local minima is exponentially vanishing in comparison with that containing global minima.

Absence of spurious valley for ultra-wide networks are explored in \cite{Venturi,Nguyen19,Li,Nguyenvalleys,Ding}. \cite{LossSurfaceBinary,AddingOneNeuron,Kawaguchi19} show that by adding a single-layer network or even a single special neuron in the shortcut connection, every local minimum becomes global. \cite{landdesign,Gao,Feizi} design new loss functions or special networks so that all local minima are global. \cite{ResNetsProvablyBetter,KawaguchiBengio} prove that depth with nonlinearity creates no bad local minima in a type of ResNets in the sense that the values of all local minima are no worse than that of global minima of corresponding shallow linear predictors. \cite{Spectrum,RandomMatrix} use random matrix theory to study the spectrum of Hessians of loss functions, which characterizes the landscape in the neighborhood of stationary points. \cite{Mei,Zhou_Feng} study the landscape of expected loss. \cite{Bianchini} shows the topological expressiveness advantage of deep networks over shallow ones in terms of bounds on the sum of Betti numbers. \cite{piecewiseconvex} shows that the loss surface of a feed-forward ReLU network regularized with weight decay is piecewise strongly convex on an important open set. 

\textbf{Saddle points} \cite{attackingthesaddle} argues that a main source of difficulty for local search based optimization methods comes from the proliferation of saddle points. \cite{GLOBALOPTIMALITY,CRITICALPOINTS,Kawaguchi_nips16} discuss saddle points for deep linear networks. \cite{howtoescape} designs a local search algorithm that can escape saddle points efficiently. Despite these works, concrete conditions for existence of saddle points were still missing for ReLU neural networks before this work.

\textbf{Empirical studies of landscape} Besides theoretical researches, there have been some experimental explorations on visualization of landscape \cite{Goodfellow,Poggio,Visualizing,largscalelandscape}, geometry of sub-level sets \cite{Topologygeometry} and mode connectivity \cite{EssentiallyNoBarriers,modeconnectivity}.

\textbf{Convergence of gradient based optimization} Understanding the landscape of loss functions focuses on the geomtry side of neural network optimization. Another line of research studies optimization of neural networks from a algebraic point of view by exploring the convergence of gradient based methods. These two lines of researches complement each other. Some recent works, e.g., \cite{SimonDu19,Allen-Zhu,QuanquanGu,OnehiddenlayerReLUviaGD} show that gradient descent converges for fully connected, convolutional and residual networks if they are sufficiently wide, the step-size is small enough and the initial weights have small magnitudes. Instead of small regions around global minima, in this work we consider the large scale structure of loss landscape for one-hidden-layer ReLU networks of any size. Convergence analysis for networks of any size and arbitrary initial weights still requires an understanding of global landscape.

\textbf{Comparisons with our work} The works most related to ours are \cite{SoudryCarmon,Multilinear,SafranShamir16,SoudryHoffer,SafranShamir18}, all of them dealing with local minima of one-hidden-layer ReLU networks. Comparing with our work, \cite{SoudryCarmon} considers only over-parameterized case, \cite{Multilinear} adopts hinge loss and linear separable data, while our theory is general and applies to one-hidden-layer ReLU networks of any size and any input. The experimental study of \cite{SafranShamir18} uses a student-teacher objective that is different than ours. \cite{SoudryHoffer} calculates the probability of having bad local minima. However, their concept of bad local minima is different from ours in the sense that they refer to local minima with nonzero losses, which are actually not genuine ones if locating outside their defining cells.   

\section{Conclusions}
We have studied from a theoretical persperctive the global loss landscape of one-hidden-layer ReLU networks, including the globalness of differentiable local minima, the conditions for existing differentiable and non-differentiable local minima and saddle points, and their locations and forms if they do exist.

In part 2 of this work \cite{globallosslandscape_part2}, we will describe how to implement efficient half-space intersection algorithm to judge the existence of genuine local minima when they are in the form of hyperplanes, and conduct experiments on both synthetic and real data to identify the existence of bad local minima and verify our theory. We will also investigate for Gaussian data how big the probability of existing bad local minima is at everywhere in whole weight space.


%

%
%
%
%
%

\ifCLASSOPTIONcaptionsoff
  \newpage
\fi



%

\bibliographystyle{plain}
\bibliography{Understanding_global_landscape_ReLU}

%
%
%

%

%
%
%




\end{document}